\newtheorem{lemma}{Lemma}
\newcommand{\pp}[1]{\textcolor{black}{#1}}
\newcommand{\PP}[1]{\textcolor{black}{#1}}
\newcommand{\arne}[1]{\textcolor{black}{#1}}
\newcommand{\lam}[1]{\textcolor{black}{#1}}
\newcommand{\srp}[1]{\textcolor{black}{#1}}
\newcommand{\bs}[1]{\textcolor{black}{#1}}
\newcommand{\srpnew}[1]{\textcolor{black}{#1}}
\newcommand{\LD}[1]{\textcolor{black}{#1}}
\def\undb#1{\mbox{\bf{#1}}}
\def\BibTeX{{\rm B\kern-.05em{\sc i\kern-.025em b}\kern-.08em
    T\kern-.1667em\lower.7ex\hbox{E}\kern-.125emX}}
\DeclareMathOperator*{\argminA}{arg\,min}
\newcommand{\rv}[1]{\textcolor{black}{#1}}
\begin{document}

\author{Lam Duc Nguyen,%
\thanks{Lam Duc Nguyen, Shashi Raj Pandey, Soret Beatriz, and Petar Popovski are with Connectivity Section, Electronic System, Aalborg University, Denmark. Email: \{ndl, srp, bsa, petarp\}@es.aau.dk.}
\textit{IEEE Graduate Member}, %
Shashi Raj Pandey, \textit{IEEE Member}, %
Soret Beatriz, \textit{IEEE Member},\\ %
Arne Br\"{o}ring\thanks{Arne Br\"{o}ring is Senior Key Expert Research Scientist with Siemens AG, Munich, Germany. Email: arne.broering@siemens.com}, %
and Petar Popovski, \textit{IEEE Fellow}}

\title{A Marketplace for Trading AI Models based on \\ Blockchain and Incentives for IoT Data}

\markboth{IEEE Journal Submission}%
{Shell \MakeLowercase{\textit{et al.}}: Bare Demo of IEEEtran.cls for IEEE Journals}

\maketitle 

\begin{abstract}
\pp{As Machine Learning (ML) models are becoming increasingly complex, one of the central challenges is their deployment at scale, such that companies and organizations can create value through Artificial Intelligence (AI).} An emerging paradigm in ML is a federated approach where the learning model is delivered to a group of heterogeneous agents partially, allowing agents to train the model locally with their own data. \pp{However, the problem of valuation of models, as well the questions of incentives for collaborative training and trading of data/models, have received a limited treatment in the literature.} 
\srp{In this paper, a new ecosystem of ML model trading over a trusted Blockchain-based network is proposed. The buyer can acquire the model of interest from the ML market, and interested sellers spend local computations on their data to enhance that model's quality. In doing so, the proportional relation between the local data and the quality of trained models is considered, and the valuations of seller's data in training the models are estimated through the  distributed Data Shapley Value (DSV). At the same time, the trustworthiness of the entire trading process is provided by the Distributed Ledger Technology (DLT). Extensive experimental evaluation of the proposed approach shows a competitive run-time performance, with a 15\% drop in the cost of execution, and fairness in terms of incentives for the participants.}
\end{abstract}
 
\begin{IEEEkeywords}
Blockchain, Federated Learning, Model Trading, Data Valuation, Shapley Value.
\end{IEEEkeywords}

\IEEEpeerreviewmaketitle

\section{Introduction}

\rv{Personal IoT devices keep generating an enormous amount of sensing data that is expected to reach 79.4 Zettabytes (ZB) globally in 2025\cite{data794}. Several attempts to enhance and adapt business workflows have been made towards exploiting the provision of IoT data\cite{previous1,previous2}. In this regard, training machine learning models and data sharing are two popular uses of IoT data. Furthermore, emerging diverse platforms for accessing and sharing IoT data connects various distributed IoT devices/data sources, thereby facilitating suppliers to exchange their data \cite{infocom2019}.}

\lam{For example, in IoT systems for air quality monitoring and emission control, Air Quality Index (AQI) is a quantity defined to estimate the degree of severity for air pollution and CO2 emission levels. AQI quantifies the concentration of various particles in the air, such as PM2.5 or PM5.0, using state-of-the-art sensor devices \cite{bishoi2009comparative}. There are two most popular measurement methods for AQI: i) sensing-based \cite{jo2020development}, and ii) vision-based \cite{9184079}. In the sensing-based method, the IoT sensor devices are delivered around the area interest, e.g., city, urban, to collect the quality of the air and emission levels. These measurements are then forwarded to the central server for further analysis and calculation of AQI. In the vision-based method, the devices with an embedded camera, such as a camera station in the road, or individual mobile phones, can take photos of a specific area and send them to the server. The server then applies advanced image processing techniques on these images to derive the analysis report of air quality. However, both methods have problems due to (i) high energy consumption for collecting data and transmission, (ii) requirement for a large dataset for a high quality AQI estimation, (iii) the server acting as a single point of failure, and (iv) data privacy concerns under General Protection Regulation (GDPR). Several recent works have addressed issues related to (i) using efficient resource management techniques and (ii) with \pp{dense sensory networks \cite{moltchanov2015feasibility}.} However, the primary concerns about (iii) and (iv) remain a single point of failure network topology and data privacy protection. They are yet to be addressed in an efficient manner.
\rv{In addition, the regulations such as EU’s GDPR, California’s Consumer Privacy Act (CCPA), and China’s Cyber Security Law (CSL)\cite{gruschka2018privacy} limits the reckless use/collection of personal information and fosters data privacy.} Hence, a feasible joint solution to address challenges raised in (iii) and (iv) is imperative for optimized operation of the market for data exchange.}

\begin{figure} 
    \centering
     \includegraphics[width=0.8\linewidth]{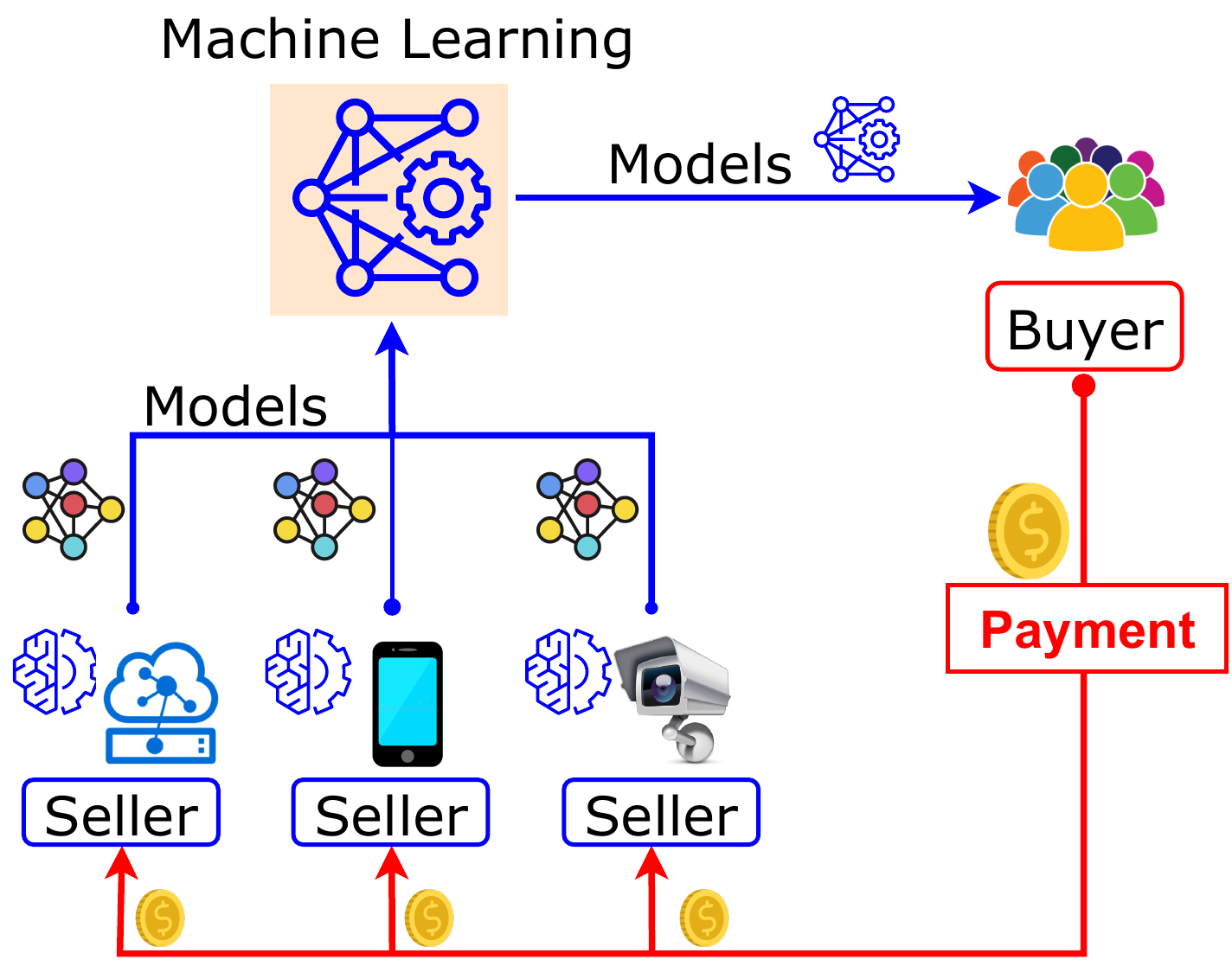}
    \caption{A motivation example: IoT devices contribute to train an ML model for the buyer to predict CO2 emission levels and get incentives from their contributions.}
    \label{fig:motivation}
\end{figure}

\lam{In this regard, more recently, Federated Learning (FL) has been considered a key solution to address the privacy issue in training learning models \cite{mcmahan2017communication}.
\rv{FL is a distributed model training paradigm that aims to solve the challenges of data governance and privacy by training algorithms collaboratively rather than transferring the data itself.}
For example, in a typical FL setting, at first, as shown in Fig. \ref{fig:motivation}, the IoT devices collect the pollution and CO2 emission levels and store them in their local database. Consider an interested organization or individual, termed a \textit{buyer}, willing to train an ML model to predict a specific area's emission level. However, they do not \pp{possess sufficiently} large datasets about sensing information or image data. In such a scenario, they can send their initial model to a model marketplace to find appropriate parties interested in contributing to the model training process. Then, the IoT devices, termed \textit{sellers}, can download the initial model and train it using their local data. After that, the IoT devices can share the updated model weights to the marketplace, where there is an aggregator to aggregate submitted local models to build global models. Based on the aggregated global models, the model buyer can use the global model to predict the emission levels \pp{with acceptable precision}.} \lam{In this manner, using the FL approach, we can address the problem of data privacy where data is locally trained without the need to transmit to a central server.} However, from a systems \arne{perspective, a shared IT environment, such as an aggregator in the marketplace, may become a single point of failure in terms of data integrity, trust, security, and transparency \cite{xu2014ubiquitous}}.

\rv{A conventional data market is often deployed as a centralized service platform that gathers and sells raw or processed data from data owners} (e.g., the trained learning models) to the consumers \cite{radhakrishnan2018streaming} \cite{niu2018achieving}. \pp{This leads to two important concerns.} 
\rv{\emph{First}, this strategy exposes the platform as a single point of security risk; the malfunctioning platform servers has serious security concerns including data leakage, inaccurate calculations results, and manipulation of data price.}
\pp{\emph{Second}, \srp{collaboration for model training} \rv{raises questions in terms of how to motivate participants to participate in such an ML training endeavor.} The incentive for each IoT client based on their contribution should be fair and transparent. These features are not present in a  standard FL setup, as in many applications there is no clear and natural incentive mechanism for involved participants to provide quality information. This calls for a carefully designed mechanisms to reward parties economically and thus incentivize participation}~\cite{langevoort1992fraud, pandey2020crowdsourcing}. For example, a fixed price per data point could motivate participants to collect massive amounts of low quality or fake data if there is no intermediary process to check quality of training data. Besides, another reason that may disincentives parties from sharing data could stem from privacy and integrity concerns regarding the use of participant’s data once it is shared. For instance, the sellers can re-use the data which has already been sold.

\rv{\pp{The aforementioned challenges can be effectively handled by a Distributed Ledger Technologies (DLTs).}\footnote{\rv{In this work, the terms \emph{Blockchain} and DLT are used interchangeably. Blockchains are a type of DLT, where nodes maintain a copy of the ledger having embedded chains of blocks. These blocks are basically composed of digital pieces of information, particularly defined as \textit{transactions}.}} \rv{DLTs and Blockchains enable untrusted parties to share information in an immutable and transparent manner\cite{bitcoin}.}}
\rv{Outside of its key role in financial transactions, the applications of DLTs can be seen as a key enabler for trusted and reliable distributed IoT systems, e.g., a distributed IoT data marketplace}
\rv{For instance, in a Blockchain-enabled IoT data marketplace \cite{nguyen2021modeling}, Blockchain transactions include IoT sensing data, or system control messages, and these are recorded and synchronized in a distributed manner in all the involved participants of the network\cite{9106844}.}
\rv{Furthermore, DLTs enable the preservation of all transactions in immutable records, with each record being spread across several participants. Thus, the decentralized nature of DLTs ensures security, as does the use of robust public-key encryption and cryptographic hashes.}
\rv{The advantages of  incorporating DLTs into trading AI models in IoT systems include: i) ensuring immutability and transparency for historical AI model trading records, ii) eliminating  the need for third parties, and iii) developing a transparent system for AI model trading in heterogeneous networks to prevent tampering and injection of fake data from the stakeholders, according to \cite{iotmagazine, 9426434}.} With the wide spread of ubiquitous marketplaces recently, \PP{it became relevant to investigate} the use of AI/ML model trading in marketplace environments.

With the aforementioned motivation, we propose a Blockchain-based model trading system which enables a secured and trusted marketplace to collaboratively train ML models as well as guarantees fair incentives for every participants and privacy of data. 
%
%
Based on the quality of the uploaded models, which is quantified by using a  distributed Data Shapley Value (DSV), the participants\footnote{The terms ``\textit{participants}",``\textit{clients}" and ``\textit{agents}"  in this work are used to refer to ``\textit{IoT devices}".} can get the incentive based on the updated models, for example, as tokens or fiats. Note that based on our proposed system, the parties do not need to share their local data, but only provide customized models or query interface to the marketplace. Consequently, the proposed system allows multiple participants to jointly train the ML models on the marketplace based on their own training data. Buyers who need to train their ML model will pay to the market for the improvement of their model, and sellers who sell their contribution to train the ML models will get paid by the market via smart contracts. 

The main features of the proposed model trading are:  
\begin{enumerate}
    \item \textbf{Trusted and transparent transactions}: The DLT is considered a trusted, tamper-proof, and transparent system in which the participants can check and follow the progress of a training task. Based on that, the model is exchanged and traded securely and transparently. 
    \item \textbf{Valuation of Data:} The local models contributed by the trainers (service sellers) are collected and evaluated via Shapley Value (SV) extension to approximately estimate the quality of the models. 
    \item \textbf{Fair Payment:} The participants receive their reward, which is proportional to the usefulness of their data in improving the models. The distributed incentive mechanism for FL based on SV measures participants' contributions in the marketplace.
\end{enumerate}


\subsection{Contributions and Paper Organization}
The major contributions are summarized as follows: 
\begin{itemize}
    \item \textbf{ML Model Marketplace}: A Blockchain-based model trading system \srp{that allows participants to purchase learning models and sell contributions in training them. The system records the  trading details \cite{nguyen2021modeling} in a tamper-proof distributed ledger.}

    \item \textbf{Federated Data Shapley Value (SV): } \srp{We use the data SV to estimate the valuation of participants' data and
    evaluate their contribution to the model during local training. We show that the standard SV value is inefficient for distributed ML and deploy an extension of standard SV for our platform. The method is robust and allows plugging any developed mapping functions related to the device's local data into the proposed distributed Shapley mechanism for value quantification.  As a result, one can design a contribution-based, efficient incentive mechanism to stimulate model trading.} 

    \item \textbf{Performance Evaluation}: We have conducted extensive simulations and experiments, demonstrating  that the proposed approach shows a competitive run-time performance, with a 15\% drop in the cost of execution and fairness in terms of incentives for the participants.
\end{itemize}

The rest of the article is organized as follows.
Section II, presents the concepts of DLTs, FL, as well as definition of data valuation schemes used in this paper. This is followed by description of the system model of the marketplace for ML model trading and explain in detail how the system works. In Section III, the value of ML models is calculated using the \emph{Approximate Federated Shapley Value (AFS)}. 
Section IV contains description of the testbed and experimental results.
\pp{\rv{Section V discusses related works and finally, Section VI concludes the paper}.}

\section{\pp{Preliminaries}}
\subsection{Standard FL}

\srp{\rv{FL is a distributed machine learning setting in which numerous entities (clients) cooperate on training a learning model without disclosing their available raw data \cite{mcmahan2017communication}.}} Instead, clients distributively perform computations on their data and transfer obtained local learning parameters updates to the server for aggregation process. The aggregated model, i.e., the global model, is broadcast back to the clients for the next round of local computations resulting in the local learning parameters. The interaction between the server and clients to solve the learning problem continues until \pp{an acceptable} level of model accuracy is achieved \cite{kairouz2019advances}. In this manner, FL offers (i) privacy-preserving benefits in the model training approach by not requiring clients to share their local data to the server, and consequently, (ii) lower communication overhead by offering distributed model training paradigm and exchange of model parameters only. \rv{Therein, FL enables training AI/ML models at edge networks}
%

%

\begin{figure}[t!]
    \centering
    \includegraphics[width=1.0\linewidth]{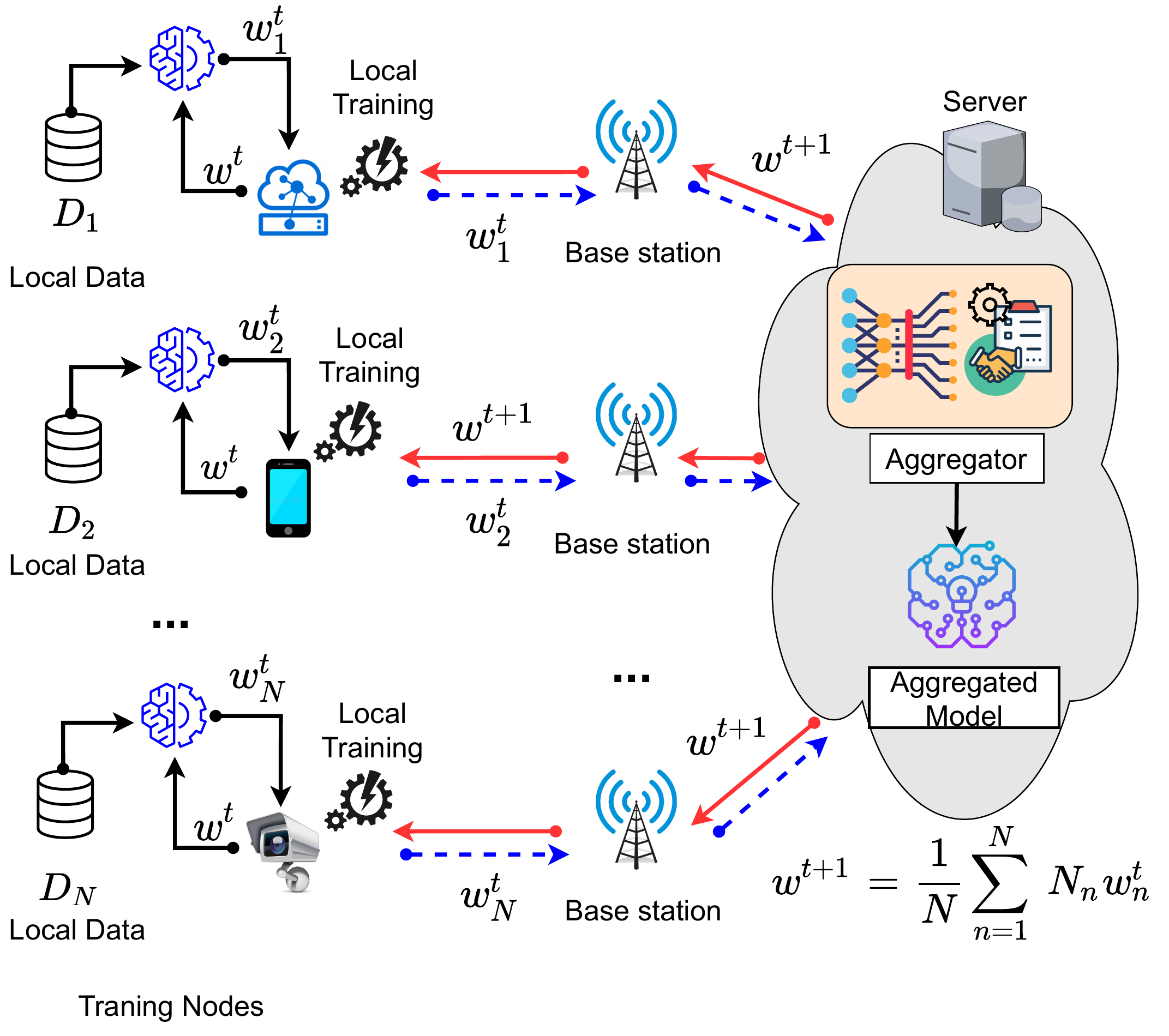}
    \caption{Standard FL.}
    \label{fig:architecture}
\end{figure}

\begin{table}[t!]
\centering
\caption{Summary of key notations.}
 \begin{tabular}{@{} | p{1cm} | p{5cm}| @{}}
\hline
 \textbf{Notation} & \textbf{Meaning}  \\ [0.5ex] 
\hline
$ MI_i $         & DLT miner $i$     \\
\hline
$\mathcal{N}$             & Set of $N$ participants (clients)   \\
\hline
$D_i$            & Private local dataset of user $i \in \mathcal{N}$   \\
\hline
 $\undb{M}_i $           & Local model of user $i \in \mathcal{N}$   \\
 \hline
$\mathcal{M}_G $           & Global model aggregated at DLTs   \\
\hline
$\mathcal{M}^0_G $           & Initial Global model at DLTs   \\
\hline
$L(\undb{M}^t_i)$ & Loss function    \\
\hline
$\eta $ & Learning rate  \\
\hline
$S_i $ & Model trainer (\bs{or Seller}) $i$    \\
\hline
$B_i $ & Model owner (\bs{or Buyer}) $i$      \\
\hline
$\mathcal{P}_d$ & Deposit from buyer \\
\hline
$\undb{B}$ & Training batch size  \\
\hline
$\mathcal{A}$    & Training algorithm     \\
\hline
$ U(\cdot) $     & Utility function   \\
\hline
$ \phi_i $       & Valuation of data contributor $i$ \\
\hline
$ E $            & Number of local epochs for a FL setting    \\
\hline
$ T $            & Number of training interactions    \\
\hline
$\mathcal{T}_i $ & A trade deal between $S_i$ and $B_i$   \\
\hline
$\widetilde{\mathcal{M}}$ & Approximated model $\widetilde{\mathcal{M}}$   \\
\hline
\end{tabular}
\label{tab:symbols}
\end{table}

\rv{Fundamentally, there exists two main actors in the FL system: (i) the data owners, often termed as participants, and (ii) the model owner, which is the FL server. Consider a set of $\srp{N}$ data owners, defined as $\mathcal{N}=\{1,2,\ldots,N\}$, where each of them has a private dataset $D_{i \in \mathcal{N}}$. In Table~\ref{tab:symbols}, we provide the summary of key notations used in this paper. Each data owner $\srp{i}$ trains a local model $\undb{M}_i$ using its dataset $D_i$ and sends only the obtained local model parameters to the FL server. Then, the FL server aggregates all the collected local models to build a global model, $\mathcal{M}_G = \sum_{i \in \mathcal{N}} M_i $. This is where, in principle, the FL approach differs from the traditional centralized training where $D = \cup_{i\in \mathcal{N}} D_i$ is used to train a model $\mathcal{M}_T$, i.e., data first gets aggregated centrally before the actual model training happens. In Fig. \ref{fig:architecture}, we show a standard architecture and an overview mechanism of the FL training process.
We assume that the data owners are honest, i.e., actual private data will be used for the local training, and correspondingly, the FL server will receive accurate local models from the data owners. Following to which, the workflow of standard FL can be described as below.}

\rv{First, considering the target application, the server decides the training task and defines the corresponding data requirements. Furthermore, the server also specifies the hyper-parameters of the global model and the training process, e.g., the learning rate \srp{$\eta$}. The server then broadcasts the initialized global model $\mathcal{M}^0_G$ and the learning task to a subset of selected participants.}
Next, based on the global model $\mathcal{M}^t_G$, where $t$ denotes the current global iteration index, i.e., the communication rounds between the participants and the server, each participant uses its local data to update their model parameters $M^t_i$. \rv{In doing so, during iteration $t$, the participant \srpnew{$i \in \mathcal{N}$} aims at finding the optimal parameters $\mathcal{M}^t_i$ that minimize the local loss problem $L(M^t_i)$, defined as the finite-sum of empirical risk functions as} 

\begin{equation}
    \mathcal{M}^t_i = \argminA_{M^t_i} L(M^t_i).
\end{equation}
\rv{We formally call it \textit{local iteration}. Subsequently, the obtained local model parameters from participants are sent back to the server, where they are aggregated to get the global model parameters $M^{t+1}_G$. Eventually, the global model is then broadcast back to the data owners for the next round of local iteration; hence, the iterative process is continued. In doing so, the server minimizes the global loss function $L(M^t_G)$ as the following approximation in the distributed setting of FL:}

\begin{algorithm}[t!]
\DontPrintSemicolon
  
   \KwInput{Local mini-batch size $\undb{B}$, number of participants per interaction $m$, number of global interactions $T$,  number of local epochs $E$, and learning rate $\eta$}. 
  \KwOutput{ Global Model $\mathcal{M}_G$}.
  \textbf{LocalTraining}($i$, $\undb{M}$): 
  Split local dataset $D_i$ to mini-batches of size $\undb{B}$ which are included into the set $\mathcal{B}$.\;
  \For{\normalfont{each local epoch $j$ from $1$ to $E$}} 
  {
    \For{each $b \in \srp{\mathcal{B}}$}{$\undb{M} \leftarrow \undb{M} - \srp{\eta \nabla L(\undb{M}; b)}$;\; } 
  }
  \textbf{Return} $\undb{M}$ to the server.\;
  
  
  \textbf{Initialized} $\mathcal{M}_G^0$.\;
  \For{\normalfont{each interaction} $t=\{0, 1,2,3,\ldots,T-1\}$}
  {
        
        $\srp{\mathcal{S}_t} \leftarrow $  (\normalfont{random set of $m$ clients});\;
        \For{ \normalfont{each participant} $i \in \mathcal{S}_t$ \normalfont{\textbf{in parallel}} }
        {
            $\srp{\undb{M}_i}^{t+1} \leftarrow \textbf{LocalTraining}(i, \mathcal{M}_G^t)$;\;
        }
        $\mathcal{M}_G^{t+1} = \frac{1}{\sum_{i\in \mathcal{N}} D_i } \sum_{i=1}^N D_i \undb{M}_i^{t+1}$;\; 
        
  } 
\caption{Federated Averaging (FedAvg) Algorithm}
\end{algorithm}

\begin{equation} 
L\left ({\mathcal{M}_{G}^{t}}\right)=\frac {1}{N}\sum \nolimits_{i=1}^{N}L\left ({\mathcal{M}_{i}^{t}}\right). \tag{4}
\end{equation}
Note that the \srp{FL process} can train different ML models that essentially use the SGD method such as Support Vector Machines (SVMs), neural networks, and linear regression. 

\rv{However, a single server dependency in the traditional FL framework makes the system vulnerable to threats, such as when the server behaves maliciously. Therefore, integrating FL with DLTs should be a promising approach to address limitations\cite{kim2019blockchained}}. 

\begin{figure}[b!]
    \centering
    \includegraphics[width=1.0\linewidth]{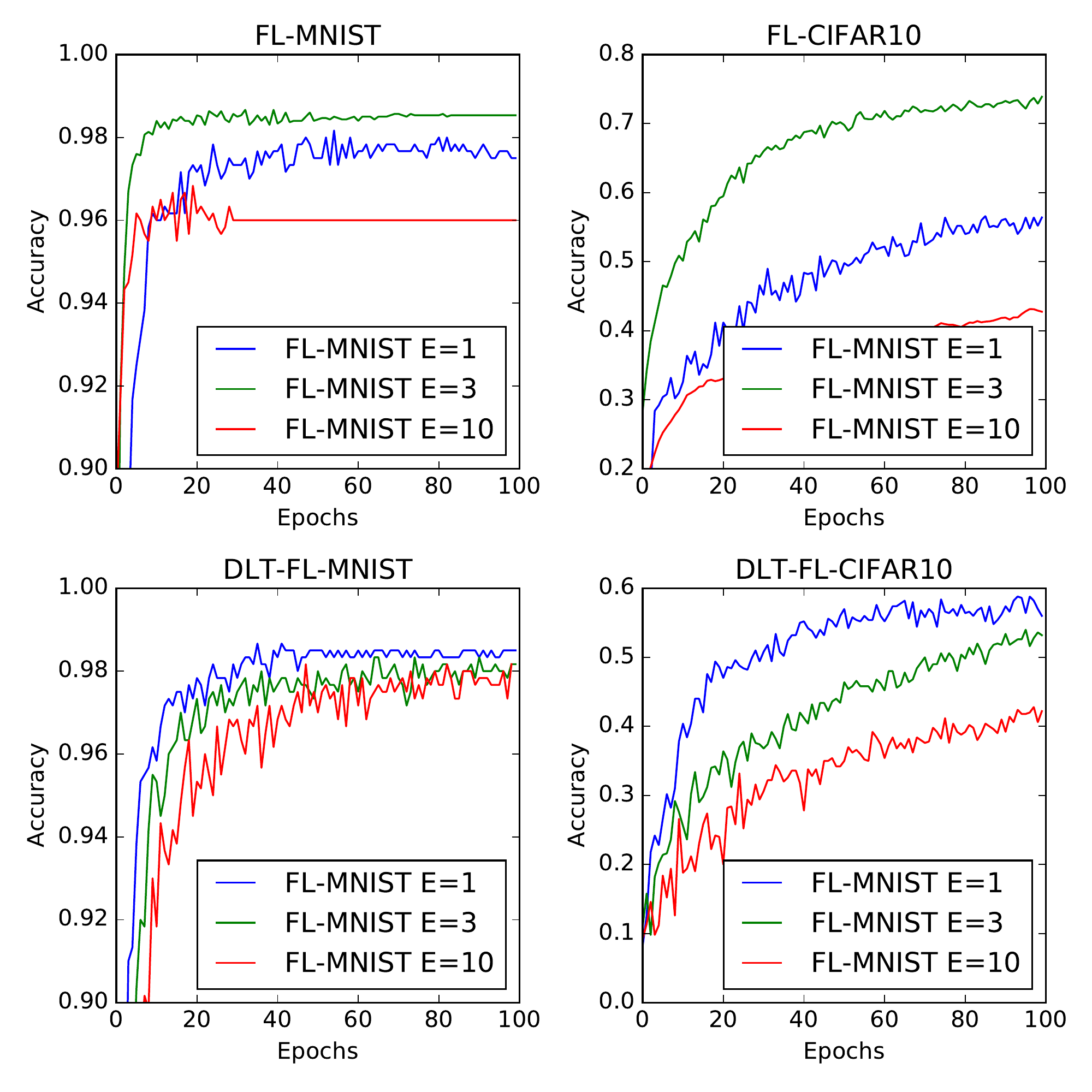}
    \caption{The accuracy of Standard FL and DLT-based FL.
    \label{fig:standardcomparison}}
\end{figure}

\subsection{Distributed Ledger as a Service for FL}
\rv{DLT is a peer-to-peer distributed ledger that records transactions in a network in a transparent and immutable manner. Besides, smart contract, which is considered as a key innovation in DLT/Blockchain area, provide programmability contracts to the DLTs, in the sense that the defined agreements in contracts are executed autonomously.}
%
With the mentioned nature advantages of DLTs and smart contract, the FL framework running on the top of DLT should be completely distributed and avoid the single point of failure issue. 

\srp{In the DLT-based FL,} we assume each client device is always connected to one of the DLT miners and, if the physical connection with the current DLT miner becomes unavailable, then the device will be automatically associated with another DLT miner.
In each miner-device pair, the DLT miner works as the leader of the associated IoT devices, and they are responsible for uploading and downloading data or training models.
\rv{During the training process, the IoT device downloads the latest global model recorded in the ledger and trains \srp{ for the updated version of the} local model using their private local data. After completing the local training, the device uploads the local model to the paired DLT miner and \srp{the global aggregation process starts}. In the training time, all involved IoT devices are allow to download the latest information of associated DLT miners to receive the evaluation of the IoT devices and global model updates. 
Finally, each IoT device publishes its local training model and enters to a new round of local iteration using the newest version of the obtained global model. In this manner, the iterative ML model training process is operated until the global model has achieved a satisfactory accuracy or convergence.}

Each miner has its verifier and block to ensure that the real models and the contributions of devices are updated. Each block contains a head and body parts. The blockhead contains a pointer to the next block, and the body part contains a set of validated transaction information. The local models are formed in transaction format and in order to make the solution scalable, the local models are recorded in IFPS storage, such that just a hash version of the models is recorded in the distributed ledger.
\LD{The basic comparison between standard FL and DLT-based FL is presented in Fig.\ref{fig:standardcomparison}. The accuracy is similar in both standard FL and DLT-based FL, but the time required for convergence of DLT-based FL is higher than standard FL because of extra verification and consensus in the system.}

\subsection{Data valuation using Shapley value}

\rv{Game theory is an economic tool best-suited to analyze a system where two or more participants get involved in to achieve a desired payoff. The Shapley Value (SV) is a solution concept of fairly distributing the incentive and payoff for the involved parties in coalition \cite{roth1988shapley}. In this regard, the SV applies mainly in scenarios where the contributions of each involved participant are unequal, but all the participants work in cooperation with each other to achieve the payoff. 
The SV of user $i$ is defined as the average marginal contribution of $i$ to all possible subsets of $D=\{D_1, D_2,\ldots, D_N\}$ formed by other users as}

\begin{equation}
    \phi_i(N, U) = \frac{1}{N!} \sum_{\mathcal{S} \subseteq \mathcal{N} \setminus \{i\}} \frac{U(M_{\mathcal{S}\cup \{i\}}) - U(M_\mathcal{S})}{\binom{N-1}{|\mathcal{S}|}},
    \label{eq:standard shapely}
\end{equation}
where the function $U(\cdot)$ gives the value for any subset of those users, e.g., let $\mathcal{S}$ be a subset of $\mathcal{N}$, then $U(\mathcal{S})$ gives the value of that subset. 
This captures the average \srp{value of the} contributions of user $i$ for subsets of all coalition of users.
%
Intuitively, assume that the user's data is to be collected in a random order, \srp{and that every user $i$ receives its marginal contribution for the collected data.} If we average these contributions over all the possible orders of $N$ users, we obtain $\phi_i(N, U)$. The importance of the SV is that it is the unique value division scheme that satisfies the following desirable properties described as follows.
\begin{itemize}
    \item \textit{\textbf{Symmetry}}: For all \srp{$\mathcal{S}\subseteq \mathcal{N}\setminus \{i, j\}$}, if user $i$ and $j$ are interchangeable, and $U(\mathcal{S} \cup \{i\}) = U(\mathcal{S} \cup \{j\})$, then, $\phi_i = \phi_j$. Thus, the users $i$ and $j$ contribute the same amount to every coalition of the other agents. Besides, the symmetry axiom states that such agents should receive the same payments.
    
    \item \textit{\textbf{Dummy User}:} User $i$ is considered as dummy user if the amount that $i$ contributes to  coalition is exactly the amount that $i$ is able to achieve alone, i.e., $\forall \mathcal{S}, i\notin \mathcal{S}, U(\mathcal{S} \cup \{i\}) - U(\mathcal{S}) = U(\{ i\} )$. \rv{According to the dummy user axiom, dummy users should be compensated exactly for the amount they achieve on their own. Users that make zero marginal contributions to all subsets of the data set, on the other hand, earn no compensation,} for example, $\phi_i =0$ if $U(\mathcal{S} \cup \{i\})=0, \forall  \mathcal{S} \subseteq \mathcal{N} \setminus \{i\}$. 

    \item \textit{\textbf{Additivity}:} For any two $U_1$ and $U_2$, we have for any user $i$, $\phi_i(N,U_1 + U_2) = \phi_i(N,U_1) + \phi_i(N,U_2)$, where the game $(N,U_1 + U_2)$ is defined by $(U_1 + U_2)(\mathcal{S}) = U_1(\mathcal{S}) + U_2(\mathcal{S})$ for every coalition $\mathcal{S}$.
\end{itemize}

Based on these background knowledge, we designed a distributed marketplace for trading AI/ML models based on Blockchain and incentive mechanism in IoT environment. 

\begin{figure*}
    \centering
    \includegraphics[width=0.8\linewidth]{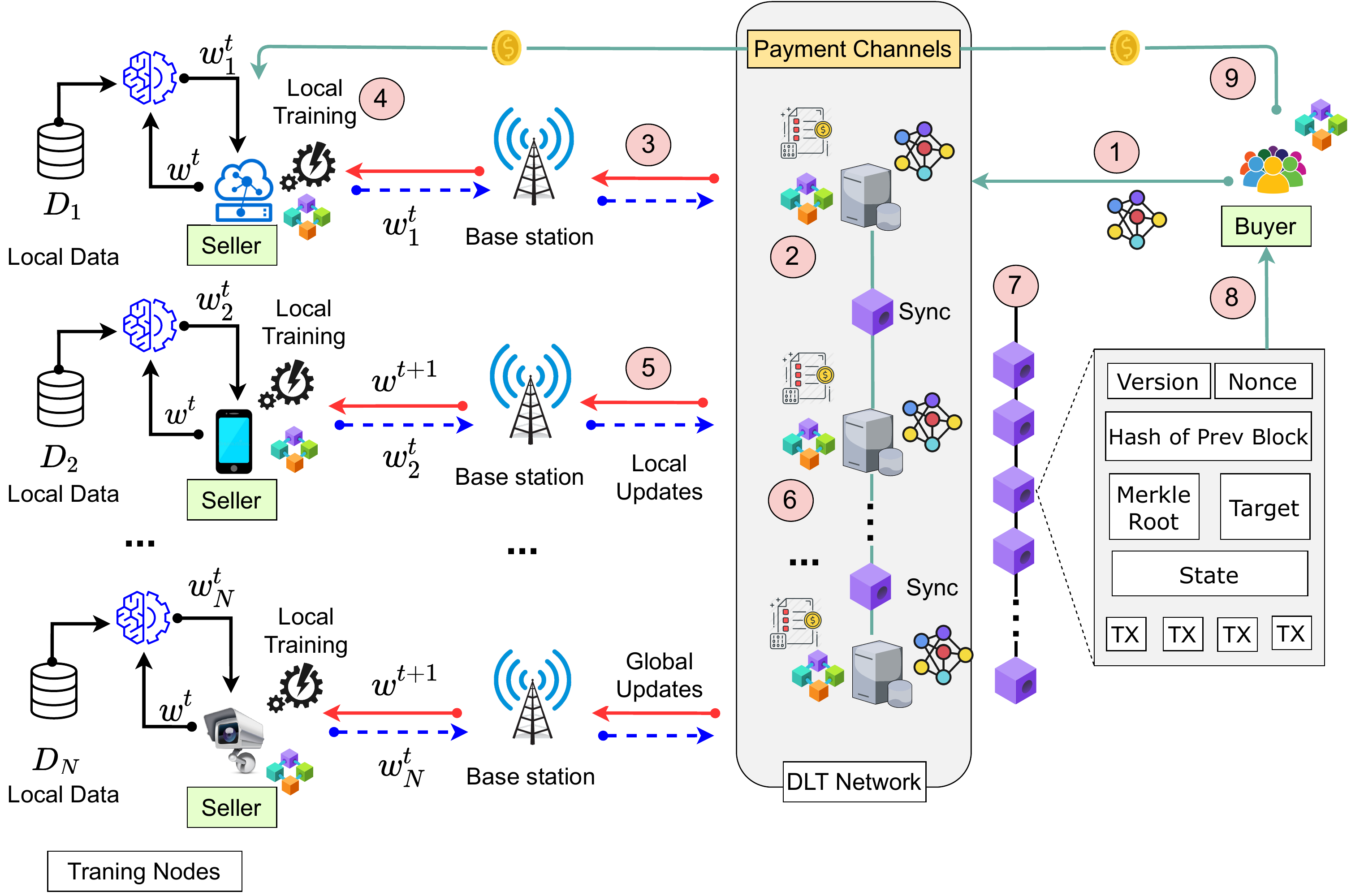}
    \caption{DLT-based ML Model Trading Framework Architecture. }
    \label{fig:system}
\end{figure*}

\section{System Design and Analysis}
\subsection{System Components}

The general architecture of DLT-based model trading includes three main components: model owners or buyers  $B$, model trainers or sellers $S$, and a distributed ledger, shown in Fig.~\ref{fig:system}. \srpnew{We assume that each seller or buyer owns one device in the network}. 
\PP{Within a deal (a trade) by $\mathcal{T}_i$,} the seller $S_i\in S$ and buyer $B_i\in B$ communicate using wireless links. The AI/ML model trading procedure occurs to complete a trade between $S_i$ and $B_i$,  exchanging models $\mathcal{M}_i\in\mathcal{M}$ and payment $\mathcal{P}_i$. First and foremost, $B_i$ completes the deposit $\mathcal{P}_d$ to \PP{$S_i$} via smart contracts in reference to the requested training models, $\mathcal{M}_i$. After the sellers complete the requests of the buyers, in terms of accuracy, convergence time, etc, the smart contracts are autonomous executed to pay for the effort of sellers using the amount of deposit $\mathcal{P}_d$ from buyers. \srpnew{Following Fig.~\ref{fig:system}, the general procedure of interaction between a single buyer $B_i$ and a single seller $S_i$ can be described as follows}:


\subsubsection{Model Owner $i$ (as buyer $B_i$)} $B_i$ could be an individual or organization who needs model training. $B_i$ sends a request $b_i$ including task type, budget, deposit, amount of data, quality of data, price, discount, etc, to the marketplace via smart contracts. 
$b_i$ will be transmitted to selected $S_i$ and recorded in the ledger via transaction $T_{i, \textrm{add}}$. After receiving the trained and aggregated models from $S_i$ and marketplace which fulfills requirements regarding to, e.g., accuracy, the $B_i$ generates a transaction $T_{i, \textrm{commit}}$ which executes payment from $B_i$'s wallet to smart contract, forwards the payment to sellers, and generates a acknowledgment message back to the distributed ledger.

\subsubsection{Model Trainer (as seller $S_i$)} 
\rv{The model trainers play two main roles in the system: i) collects sensing data from the environment (e.g., data from surveillance systems, environmental sensing data, and  geographical data),  or acts as a data hub gathering data from nearby physical devices; ii) subscribes the model training requests from the buyers, and train the models downloaded from the marketplace with the local data.}
\rv{Seller $S_i$ earns the payment $\mathcal{P}_i$ from $B_i$ after successful delivery of $M_i$ to $B_i$. After the trained models achieve a certain accuracy based on the predefined agreements in the smart contract system, upon the appearance of $T_{i,\textrm{commit}}$ generated by $B_i$, the seller $S_i$ can receive the payment, e.g., via tokens, $\mathcal{P}_i$, \srpnew{which is in fact the deposited amount $\mathcal{P}_d$ by the buyer $B_i$,} from the marketplace via smart contract. Finally, it confirms to the distributed ledger that the deal $\mathcal{T}_i$ is completed via an acknowledgment message.}

\subsubsection{Distributed Ledgers}

\rv{The Blockchain maintains a distributed ledger that stores the history of all traded models in the form of blocks, which are connected in a chronological order. On top of that, the smart contracts are deployed to autonomously control the order and execute payments, e.g,. large payment or micro-payments from involved participants without the need of human intervention. In a distributed manner, the smart contracts ensure transparency, trust and automotive of exchanging data among parties.} These features can be deployed based on the negotiation between model owners and customers via $T_{i, \textrm{deploy}}$. \rv{Furthermore, any change in smart contracts, for example, the amount of data or the model price, or updates in the discount offers, can be made via $T_{i, \textrm{update}}$.}

\rv{In a trading system, there are an enormous amount of data exchanged among parties. Thus, increasing the number of transactions leads to slower transaction processing time and, consequently, the system's overall speed. This is reasonable as every Blockchain node needs to store and execute a computational task to validate every single transaction. Therefore, to minimize the cost of storage and execution, the trading system should record only the important data, such as payment history, aggregated global models, which could be hashed and recorded at the distributed ledger. Meanwhile, the raw data can be recorded in the distributed off-line storage component. In detail, after both model sellers $S_i$ and model owners $B_i$ have fulfilled requirements defined by smart contracts, the $T_{i, \textrm{settle}}$ is autonomously executed to query the payment $\mathcal{P}_i$ from $B_i$. Then, the payment $\mathcal{P}_i$ is transferred to $S_i$'s private wallet, while the aggregated model $\mathcal{M}_i$ is delivered to the storage address of $B_i$. In the scope of this study, we assume that the data services (e.g., data storage, trading and task dispatching) are implemented on top of a permissionless Ethereum Blockchain \cite{ethereum}. In this work, the control data and AI/ML models are formatted into normal Ethereum transactions. Furthermore, in order to improve efficiency, only the digest of each transaction is recorded in the distribute on-chain ledger, and the raw data is stored off-chain by using IPFS (InterPlanetary File System).}

\subsection{Communication Workflow}
In the DLT-based FL model trading network, we \srp{revisit the notation used to denote data owners and define the number of participants} as $\mathcal{N}=\{1,2,3,\ldots, \srp{N}\}$. The miner $MI_i$ of DLT network is associated randomly with the IoT device. 
\srp{For simplicity,} we consider the case which one miner is assigned to each physical IoT device. Each IoT device has to determine its own \srp{learning} task-related dataset size and upload it to the ledger system to receive a reward. \srp{A distributed SV incorporates the quality of local data\footnote{\srp{The quality of data signifies the size of dataset used in model training, similar to \cite{jia2019towards}. In this study, we do not consider feature attributes of data to quantify its quality.}} to determine the corresponding quality of the local model. We realize that, in some case e.g healthcare, it would be better to extract features and measuring the quality based on real quality and not quantity, but it is out of this research scope.} 
\rv{In addition, in resource-constraint IoT environments, to reduce latency and optimize energy consumption, each device's local controller performs local optimizations to establish the best scheduling policy for device resources, such as CPU cycles scheduling.}

The workflow of the system is described as below. 

\textit{Step 1 (Model Initialization)}: The buyer $B_i$ initiates a model $\mathcal{M}_i$ which needs to be trained and publishes to the DLT-based marketplace. The initial model is formed in the DLT transaction format $T_{\textrm{publish}}$.
    
\textit{Step 2 (Publish initialized model to the ledger)}: Then, the transaction $T_{\textrm{publish}}$ including initialized model $\mathcal{M}_i$ is verified and recorded to the distributed ledger. At the same time, there might be many available models on the marketplace. 
    
\textit{Step 3 (Model Seller download train-required models)}: \srp{The potential seller $S_i$ can see the list of available models on the marketplace and choose to download a copy of one or multiple models of interest to train with its local data of $S_i$. 
}

    
\textit{Step 4 (Local Model Training)}: After downloading the models from the distributed ledger, the sellers train the model based on their local data. The device, \srpnew{i.e., the seller $S_i$,} has its own local dataset \srpnew{$D_i$}. \srp{Local model training aims} to minimize the loss function $f(\undb{M}_i, S_i)$, where $\undb{M}_i$ is the local model of device $d_i$ and $D_i$ is its local dataset. 
    
\textit{Step 5 (Local trained model is updated to the ledger)}: Next, the device \srpnew{$S_i$} is randomly associated with the miner $MI_i$ \srp{to which it uploads the trained local model to the distributed ledger via smart contract.} \lam{The smart contract has functions to record the updated local models from clients via DLT interface, e.g., Web3.} 
    
\textit{Step 6 (Cross-verification of the local models):} 
\rv{After receiving the local model published by IoT device in the format of transactions, the DLT miner $MI_i$ put the local model in newly generated blocks and broadcasts the model to other DLT miners in the network. Next, until other DLT miners receive the broadcasted blocks, including the local models of clients, they will verify the accuracy of local models and put the models to the new generated blocks. During this process, all the aggregated models are broadcasted to the all DLT miners in the system, and DLT miners will compare the consistency and accuracy among aggregated models. To that end, the most one will be chosen as the correct global model. Then, DLTs miners record the correct global model and the contribution of the IoT devices into the distributed ledger via smart contracts features. Otherwise, the rest of global models are considered as faulty updates.}

\textit{Step 7 (The generation and propagation of blocks)}: \rv{In order to generate a new block in the distributed ledger, DLT miners need to compute a block hash for mining and solve a cryptographic puzzle based on SHA-256, \srp{which is a one-way hash function}. As defined in popular Proof-of-Work (PoW) Blockchain, e.g., Bitcoin, Ethereum, DLT miners perform a PoW algorithm until it finds a desired nonce value or receives a new generated block from other DLT miners\cite{9271890}. There is a case, however, that the $MI_i \in MI$ acts as the DLT miner that finds the needed nonce value at the earliest, and its candidate block is generated as a new block and propagated to the other DLT miners in the network. Meanwhile, the chain can be engaged in forked problem in which multiple DLT miners find out a nonce value at the same moment. To address this issue, we use an ACK message that allows DLT miners to transmit only when each DLT miner gets the new block, which determines whether there is a fork on the main chain. Then, the DLT miner $MI_0$, which creates that newly generated block, will wait for a waiting time defined by the block ACK. Otherwise, if a fork is generated again, the process back to to previous phase to resolve the issue.}

\textit{Step 8 (Settlement)}: After the model accuracy achieves a particular value in the smart contract, the smart contract settles the deal between buyer and sellers. The finalized model is updated to the buyer and the incentive is funded to the sellers. 
    
\textit{Step 9 (Incentive to Sellers)}: Based on the contribution of each seller, the smart contract computes their contribution and transfer to sellers appropriate funds. \lam{The smart contract provides a mechanism of transparent and immutable recording and accounting contribution logs on the distributed ledger. Based on the contribution history from ledger, the clients can receive the incentives and rewards in tokens via off-chain payment channels.}

    
\textit{Step 10 (Record receipt to the Ledger)}: All bills and receipts are recorded immutably in the distributed ledger, which allows participants to check and control their deal. \lam{Besides, we also implemented the off-chain storage solution named IPFS to store hashes of data locations on the ledger instead of raw data files. The hashes can be used to query the exact file or models through the DLT systems.}

\subsection{Distributed Shapley Value (DSV) Calculation}

\begin{algorithm}[t!]
\caption{Standard Federated Shapley FedAvg}
\DontPrintSemicolon
  \KwInput{ Local mini-batch size $\undb{B}$, number of participants $m$ per interaction, $T$ is number of global interactions,  number of local epochs $E$, and learning rate $\eta$. } 
  \KwOutput{: $\mathcal{M}^T$}.
  \textbf{Initialize} $\mathcal{M}_S^0$, where $S \subseteq \mathcal{N} = \{1,2,\ldots,n \}$.\;
  
  \For{\normalfont{each subset $\mathcal{S} \subseteq \mathcal{N}$} }{
  \For{ \normalfont{each round} $t = \{0,1,2,\ldots, T-1\}$ }{
  \For{ \normalfont{each client $i \in \mathcal{S} = \{1,2,3,\ldots,n \} $}}{
    Transmit $\mathcal{M}^t$ to $n$ clients;\;
    $\mathcal{M}_i^t \leftarrow ModelUpdate(i, \mathcal{M}^t)$;\; 
    $\delta_{\mathcal{S},i}^{(t+1)} \leftarrow \mathcal{M}_{\mathcal{S},i}^t - \mathcal{M}^t$;\;}
    
    $\mathcal{M}_\mathcal{S}^{t+1} \leftarrow \mathcal{M}_\mathcal{S}^t + \sum_{i=1}^n \frac{|D_i|}{ \sum_{i=1}^n |D_i|} \cdot \delta_{\mathcal{S},i}^{t+1}$;\;
    
    \For{ $i \in \mathcal{S} = \{1,2,3,\ldots,n \}$ } 
        {
            $\phi_i = \frac{1}{n}\sum_{\mathcal{S} \subseteq \mathcal{N} \setminus \{i\}} \frac{U(\mathcal{M}_{\mathcal{S} \cup \{i\}}^t) - U(\mathcal{M}^t_\mathcal{S}) }{ \binom{n-1}{|\mathcal{S}|}}  $;\;
        }  
  }}
  
  \textbf{Return}  $\mathcal{M}^T$, and $\phi_1,\phi_2,\ldots,\phi_n$.\\
  \textbf{ModelUpdate} ($i, \mathcal{M}$): \\
  \For{\normalfont{local epoch $e=\{1,2,3,\ldots,E\}$} }{
  \For{\normalfont{batch $b$}}{ 
    $\mathcal{M} \leftarrow \mathcal{M} - \eta\nabla L(\mathcal{M}; b) $;\;}
  \textbf{Return}  $\mathcal{M}^T$ to  central server.\;
  }
      
\end{algorithm}

%

\begin{algorithm}[t!]
\DontPrintSemicolon
  
  \KwInput{\textbf{Input}: Local minibatch size $\undb{B}$, number of participant $m$ per interaction, $T$ is number of global interactions,  number of local epochs $E$, and learning rate $\eta$.} 
  
  \KwOutput{\textbf{Output}: $\mathcal{M}^T$, and $\phi_1, \phi_2,\ldots, \phi_n$.}
  
  \textbf{Initialize} $\mathcal{M}^0, \widetilde{\mathcal{M}}^0_\mathcal{S} $, where $\mathcal{S} \subseteq \mathcal{N} = \{1,2,3\ldots,N \}.$\;
  
  \For{ each round $t = \{0,1,2,\ldots, T\}$ }
  {
    Transmit $\mathcal{M}^t$ to $i\in \mathcal{S}$ clients;\;
    $\mathcal{M}_i^t \leftarrow ModelUpdate(i, \mathcal{M}^t)$;\;
    $\delta_i^{t+1} \leftarrow \mathcal{M}_i^t - \mathcal{M}^t$ , $\forall i \in \mathcal{N}$;\;
    $\mathcal{M}^{t+1} \leftarrow \mathcal{M}^t + \sum_{i=1}^n \frac{|D_i|}{ \sum_{i=1}^n |D_i|} \cdot \delta_i^{t+1}$;\;
    \For{ each $\mathcal{S} \subseteq \mathcal{N}$ } 
        {           $\widetilde{\mathcal{M}}^{t+1}_{\mathcal{S}} \leftarrow \widetilde{\mathcal{M}}^t_\mathcal{S} + \sum_{i\in \mathcal{S}} \frac{|D_i|}{ \sum_{i \in S} |D_i|} \cdot \delta_i^{t+1}$;\;
        }  
  } 
  
  \textbf{Initialize} $m=0$. \\
  \While{ \normalfont{Convergence criteria not meet} }{
    $m=m+1$;\; 
        $\pi^{m}$: \normalfont{random permutation of clients with data samples to collaboratively train $\mathcal{M}^T$};\; 
        $v_0^{m} \leftarrow U(\widetilde{\mathcal{M}}^0_{\emptyset})$;\; 
        
        \For{$n \in \{1,2,\dots,|\mathcal{S}|\}$}{
            \eIf {$|U(\mathcal{M}_{|\mathcal{S}|}) - v_{n-1}^{m}| < $ PT }
                {$v_n^{m} = v_{n-1}^{m}$;}
                {
                    $\mathcal{S} \leftarrow \{ \pi^{m}[1],\pi^{m}[2],\ldots, \pi^{m}[n]\}$; 
                    $\mathcal{\mathcal{M}}_\mathcal{S}^T  \leftarrow \sum_{i\in \mathcal{S}} \frac{|D_i|}{\sum_{i \in \mathcal{S}} |D_i|} \cdot \mathcal{\widetilde{\mathcal{M}}}_i^T$; 
                    $v_n^{m} \leftarrow U(\mathcal{\mathcal{M}}^T_\mathcal{S} )$;
                }
            $\phi_{\pi^{m} [n]} \leftarrow \frac{m-1}{m} \phi_{\pi^{m-1}[n]} + \frac{1}{m} (v_n^{m} - v_{n-1}^{m})$;\;    
        }
  }
  \textbf{Return}  $\mathcal{M}^T$, and $\phi_1, \phi_2,\ldots, \phi_n$.\;
  
  \textbf{ModelUpdate} ($i, \mathcal{M}$): \\
  \For{\normalfont{local epoch $e=\{1,2,3,\ldots,E\}$} }{
  \For{\normalfont{batch $b$}}{ 
    $\mathcal{M} \leftarrow \mathcal{M} - \eta\nabla L(\mathcal{M}; b) $;\;}
  \textbf{Return}  $\undb{M}$ to ledger.\;
  }
  
\caption{AFS Algorithm}
\end{algorithm}

The Standard Federated Shapley Value (SFSV) calculates the SV of data contributors based on equation \eqref{eq:standard shapely}. SFSV trains federated models based on the different subsets $S$ of contributors, and these models are evaluated on the standard test set.
However, computing the SV directly according to SFSV is time-consuming because models on all the combinations of data sets need to be trained and evaluated. By default, the SV computes the average contribution of a data source to every possible subset of other data sources. So that, evaluating the SV incurs significant communication and computation cost when the data is decentralized \cite{qu2020decentralized}. Consequently, for data SV in the FL environment, the methods in \cite{ghorbani2019data, jia2019towards} to calculate SV introduce extra training rounds on combinations of datasets from different data providers. Furthermore, the cost for extra rounds for training models could be expensive when the data volume is large. Therefore, there is a need for new strategies to evaluate the data value in FL. 

The main idea to that end is to exploit the gradients \srp{information} during the training process of the global model $\mathcal{M}$ to \srp{approximately reconstruct} the local models trained with different combinations of the client's datasets. \srp{Thereby, our approach (as described in Algorithm 3) eliminates the burden for the local models to be frequently re-trained to evaluate clients' contributions.}
In fact, the SV does not consider the order of data sources. \rv{However, in FL, it is of significant importance to take into account the order of data used for the model training so as to ensure a fair convergence. Furthermore, the updates of model are enforced to diminish over time by using, for example, a decaying learning rate\cite{charles2020outsized}. Hence, the sources used towards the end of the learning process could be less influential than those used earlier.} Therefore, to accommodate these attributes of learning properties in the decentralized model training paradigm of FL, we need to define new and efficient ways to compute SV. In this regard, based on the neutrality of FL, the SV for FL (FSV) could be computed in two different strategies.
%

The first method (called Single-Cal) reconstructs models by updating the initial global model $\mathcal{M}$ in FL with the gradients in different rounds and calculates the FSV by the performance of these reconstructed models. For example, if we want to reconstruct the model of $\mathcal{M}_{(i,j)}$ trained on the datasets of $D_i$ and $D_j$ of corresponding users, \srp{use} the gradients \srp{information} from sellers $i$ and $j$ in each round to update the initial global model $\mathcal{M}$ generated by the buyer. Then, the contribution is calculated using equation \eqref{eq:standard shapely}.

The second method (called Multi-Cal) calculates FSV in each  training round by updating the global model $\mathcal{M}$ from the previous training round with the value of gradients in the current training round. Next, the FSVs are aggregated from multiple rounds to get the final result. Therefore, there is no extra training process needed; these methods are considered efficient.
The main difference between these two strategies is that the first method approximates models through complete global iterations and only evaluates them to find SV afterwards. The second one approximates and evaluates models for every global iteration and calculates the marginal contribution for each global iteration. So that makes the second method more computationally expensive than the first one. To address this issue, we propose a new algorithm AFS based on the first approach with the use of \pp{Truncated Monte-Carlo (TMC) \cite{jia2019towards}} in Algorithm 3. \srpnew{In principle, AFS is an engineered derivative of the TMC algorithm to characterize clients' contributions with their available data samples in the collaborative training framework. In doing so, AFS evaluates the marginal contribution of each client instead of a subset of training data points, unlike the TMC method; thus, allowing clients to engage in the ML model trading with the offered incentive signals based on their data contributions.} 

Specifically, the first part of the algorithm shows the operation of the distributed ledger, lines 1--8 and lines 10--19. In line 1, the global model $\mathcal{M}^0$ and reconstructed models based on different chosen subsets $\mathcal{S} \subseteq \mathcal{N} = \{1,2,\ldots,N\}$ are initialized. Next, the distributed ledgers broadcast the global model $\mathcal{M}^t$ to $n$ selected clients in each global training round $t$ in line 3, and then receive the updates $\mathcal{M}^t_i$ from these clients in line 4 and the gradients of clients $\delta^t_i, \forall i \in \mathcal{S}$ for model aggregation are computed.
After that, the global model is updated in line 6 as 
\begin{equation}
     \mathcal{M}^{t+1} \leftarrow \mathcal{M}^t + \sum_{i=1}^n \frac{|D_i|}{ \sum_{i=1}^n |D_i|} \cdot \delta_i^{t+1}.
\end{equation}
Next, instead of updating all the local models $\mathcal{M}^t_i, \forall i \in \mathcal{S}$ in every global interactions $t= \{0,1,\ldots,T\}$, we can update only $n$ models $\widetilde{\mathcal{M}}^t_i$ and compute $\mathcal{M}^T_\mathcal{S}$  directly as a weighted average at the end, as 
$\widetilde{\mathcal{M}}^{t+1}_{i} \leftarrow \widetilde{\mathcal{M}}^t_i + \frac{|D_i}{\sum_{i=1}^n |D_i|} \delta_i^{t+1} $\ . 

We observe that evaluation of a model incurs a considerable cost in terms of time, especially if the test set is large. And with the basic idea of a single-round algorithm, we must reconstruct and evaluate $2^n$ models. \srpnew{Hence, we applied the method of TMC to decrease the computation cost and developed a tailored variant of TMC, i.e., the AFS algorithm, to address this issue. The details of the adapted TMC method is as follows.} First, we sample a random permutation of clients $\pi^{m}$ with their data samples used to train the global model \cite{ghorbani2019data}. After that, we scan from the first clients to the last client and calculate the marginal contribution of every new client's data in the training process. By repeating this process over multiple permutations, the approximation of SV is the average of all the calculated marginal contributions. The while loop is run until certain convergence criteria are met. In this work, we stop the loop when the average percentage change after a TMC iteration $m$ is less than a certain threshold. 

In line 21, the trained federated model $\mathcal{M}^T$ and the SVs are finally obtained. The local training part for the clients (lines 22--25) show how the clients use private data to train the model received from the distributed ledger. The clients use the classical gradient descent algorithm and report their updated local models $\undb{M}_{i|i= \{1,2,3,\ldots,n\}}$ to the distributed ledger.

\subsection{Performance bound on AFS algorithm}

An analytical bound on the AFS algorithm can be derived by taking the properties of TMC sampling approach into account~\cite{jia2019towards}. We consider AFS estimates the contribution of the individual client in the federated setting for a supervised learning task with probability at least (1-$\alpha$) that our estimator error is $\epsilon$. Then, we are interested in evaluating the general performance bound on AFS such that
\begin{equation}
    \textrm{Pr}(|\phi^F - \phi^S|\ge \epsilon) \le \alpha,
    \label{eq:bound}
\end{equation} 
where $\phi^S = < \phi^S_1, \phi^S_2,\ldots,\phi^S_n > $ is the vector of Shapley contributions generated by the standard SV and $\phi^F = <\phi^F_1, \phi^F_2,\ldots,\phi^F_n >$ is the approximation FSV using the proposed AFS method. Assume that we know the data distribution of clients to evaluate its marginal contribution. Then, the sampling $|\mathcal{S}|$ made during the evaluation process reflects the bound on the obtained approximation of AFS. Without loss of generality, we assume it is possible to quantify the range $v$ of the client's data marginal contribution in improving the global model. Then, we have the following Lemma \ref{lemma1}.

\begin{lemma}
Considering the TMC sampling approach \cite{jia2019towards} for evaluating the Shapley value, we have a minimum permutation $|\mathcal{S}|$ for a known range of the client's marginal contribution $v$ defining an upper bound of $\mathcal{O}\bigg(\frac{v}{|\mathcal{S}|}\bigg)$ on AFS such that $\alpha\ge2 \exp{\bigg(\frac{-2 |\mathcal{S}|\epsilon^2 }{v^2}\bigg)}$ satisfies for $0\le\epsilon,\alpha \le1$.
\label{lemma1}
\end{lemma}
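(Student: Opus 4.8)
The plan is to recognize equation \eqref{eq:bound} as a direct consequence of a concentration inequality for the Monte-Carlo estimator that the AFS (TMC) procedure computes. First I would make explicit that, in Algorithm 3, the approximation $\phi^F_i$ is accumulated as a running average over $m=|\mathcal{S}|$ independently sampled random permutations $\pi^{(1)},\ldots,\pi^{(m)}$ of the clients, where for each permutation the increment is the marginal contribution $v_n^m - v_{n-1}^m$ of the client in position $n$. The crucial observation is that each per-permutation marginal contribution is an unbiased single-sample estimate of the true Shapley value: averaging a client's marginal contribution over a uniformly random permutation reproduces exactly the combinatorial average in \eqref{eq:standard shapely}. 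Hence $\mathbb{E}[\phi^F_i]=\phi^S_i$, and $\phi^F_i$ is the empirical mean of $m$ i.i.d. bounded random variables.

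Second, I would invoke the assumption stated just before the lemma, namely that the range $v$ of a client's marginal contribution in improving the global model is known and finite, so that each per-permutation contribution lies in an interval of width $v$. With an unbiased empirical mean of i.i.d. variables confined to a range of width $v$, the natural tool is Hoeffding's inequality, which gives for the single-client deviation
\[
    \textrm{Pr}\big(|\phi^F_i - \phi^S_i| \ge \epsilon\big) \le 2\exp\left(\frac{-2|\mathcal{S}|\epsilon^2}{v^2}\right).
\]
Setting the right-hand side equal to $\alpha$ yields precisely the condition $\alpha \ge 2\exp(-2|\mathcal{S}|\epsilon^2/v^2)$ claimed in the lemma, and solving for $|\mathcal{S}|$ gives the minimum number of permutations $|\mathcal{S}| \ge \frac{v^2}{2\epsilon^2}\ln(2/\alpha)$ needed to guarantee the target accuracy at confidence $1-\alpha$; the constraint $0\le\epsilon,\alpha\le1$ is inherited from $\epsilon$ being an error tolerance and $\alpha$ a probability. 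To reach the $\mathcal{O}(v/|\mathcal{S}|)$ statement I would rearrange this bound to express the guaranteed error as $\epsilon \le \frac{v}{\sqrt{2}}\sqrt{\ln(2/\alpha)/|\mathcal{S}|}$, reading the order notation as the statement that, at fixed confidence, the estimation error decays with the number of samples and scales linearly in the range $v$.

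The step I expect to be the main obstacle is the unbiasedness and independence justification of the first paragraph. Specifically, one must argue that the truncation step in TMC, where contributions are frozen once $|U(\mathcal{M}_{|\mathcal{S}|}) - v_{n-1}^m| < \textrm{PT}$, does not bias the estimator beyond a controllable term, since strictly the algorithm concentrates around the \emph{truncated} objective rather than the exact Shapley value $\phi^S$. A careful treatment would either absorb this truncation gap into the tolerance $\epsilon$ or bound it separately using the same range parameter $v$, after which the remaining argument is a routine coordinatewise application of Hoeffding's inequality; a secondary, milder subtlety is justifying that the sampled permutations may be treated as drawn i.i.d. so that the inequality applies for each client's contribution in the vector $|\phi^F-\phi^S|$.
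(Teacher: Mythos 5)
Your proposal follows essentially the same route as the paper's proof: both reduce the claim to Hoeffding's inequality applied to the permutation-sampled marginal contributions with known range $v$, yielding $\textrm{Pr}\big(|\phi^F-\phi^S|\ge\epsilon\big)\le 2\exp\big(-2|\mathcal{S}|\epsilon^2/v^2\big)$ and hence the stated condition on $\alpha$. Your write-up is in fact more explicit than the paper's two-line argument --- the unbiasedness of the per-permutation estimator and the possible bias introduced by the truncation threshold PT are not discussed there --- and your remark that Hoeffding actually gives an error of order $v/\sqrt{|\mathcal{S}|}$ rather than the stated $\mathcal{O}\big(v/|\mathcal{S}|\big)$ points to a looseness in the lemma statement that the paper's own proof likewise leaves unaddressed.
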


\begin{proof}
The proof can be derived using Hoeffding theorem \cite{hoeffding1951combinatorial} for a known range of marginal contribution of clients. In practice, the distributed ledger can reuse the average of marginal contributions of clients, with known range $v$, to derive a sampling permutation $|\mathcal{S}|$. Then, we have 
$\textrm{Pr}\bigg(\sum\nolimits_{i \in \mathcal{S}\subseteq N}(\phi_i - \mathbb{E}(\phi_i))\ge \Delta\bigg)\le 2\exp{\bigg(\frac{-2|\mathcal{S}|\Delta^2}{v^2}\bigg)}$.
Taking the average of marginal contributions on the left-hand side of the inequality, and combining it with \eqref{eq:bound}, we get $\text{Pr}(|\phi^F - \phi^S|\ge \epsilon) \le 2 \exp{\bigg(\frac{-2 |\mathcal{S}|\epsilon^2 }{v^2}\bigg)}$. This concludes the proof.        
\end{proof}

\begin{figure}
    \centering
    \includegraphics[width=0.8\linewidth]{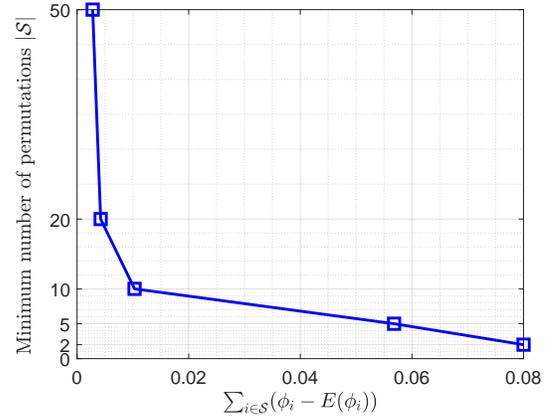}
    \caption{Performance analysis of AFS algorithm. }
    \label{fig:sampling_bound}
\end{figure}
\srpnew{In Fig.~\ref{fig:sampling_bound}, we show the results on performance analysis of AFS algorithm. We observed variability in the minimum permutation $|\mathcal{S}|$ required to ensure a defined deviation between the average value of contributions across clients, as measured using AFS, and the standard SV. For tighter bounds, the number of required permutations is large. This is intuitive, as the distributed ledger expects a larger sampling value $|\mathcal{S}|$ to define better confidence bound on the performance that minimizes the approximation error using AFS.}

\section{Performance Evaluation}

\subsection{Experimental Settings}
\lam{To demonstrate the applicability of our proposed system, we implement a proof-of-concept for the trading model in an IoT network. In this section, we introduce enabling technologies involved with the prototype.}

\subsubsection{Distributed Ledger} In this study, we implement Ethereum platform for the experimental.  Ethereum\footnote{https://ethereum.org/} is a distributed public blockchain network that focuses on running programming code of any decentralized application. Specifically, Ethereum is a platform for sharing information across the globe that cannot be manipulated or changed. Ethereum has its own cryptocurrency, called \textit{Ether} (ETH), and its own programming language, called \textit{Solidity}. The decentralized applications on the network is called \textit{Đapps}. Practically, Ethereum provides a convenient platform for development and smart contracts system to integrate with FL. We run Ethereum network via Ganache\footnote{https://www.trufflesuite.com/ganache} which is a personal blockchain for rapid Ethereum distributed application development. 
 
\subsubsection{Datasets} In the scope of this study,  we conducted the experiments on the MNIST data set. The dataset contains around 60,000 training images and over 10,000 testing images. Each client holds a part of dataset locally depending on the scenarios. 

\subsubsection{IoT Devices and Workstation} We use Raspberry Pi 3 with the following \srp{configurations}: Pytorch, OS Raspbian GNU/Linux 10, and Python version 3.7. We note that CUDA is not available for the model. The workstation has the system configurations as CPU i7-7700HQ, GPU GTX, Pytorch, OS Linux Ubuntu 20.04 , Python version 3.7.8 using Anaconda, and the CUDA version 11. \lam{These IoT devices are connected via Wifi access point.} 

\begin{figure}[t!]
    \centering
    \includegraphics[width=0.8\linewidth]{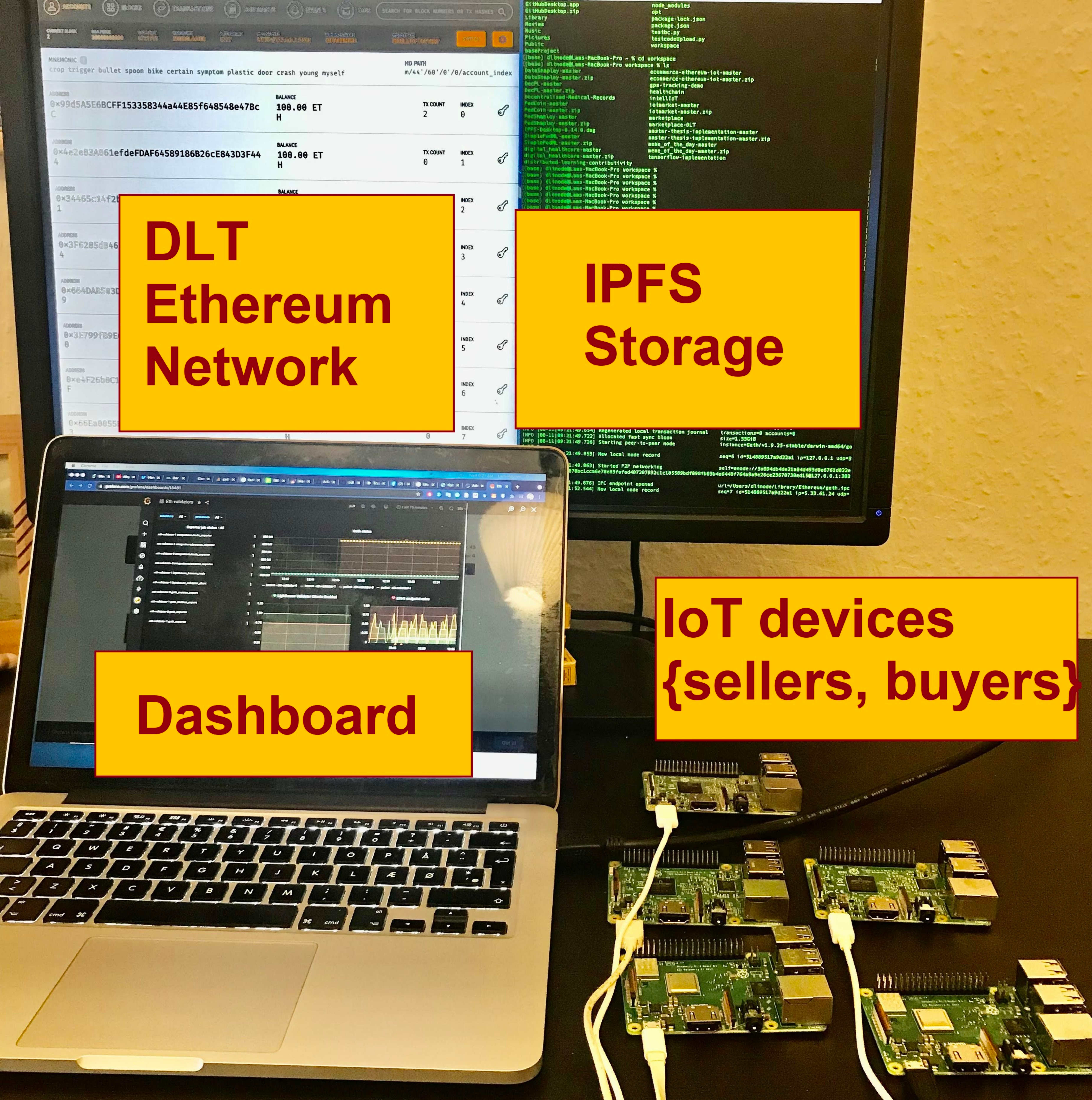}
    \caption{Blockchain-enabled model trading testbed. The testbed includes DLT Ethereum network running over Ganache, IPFS storage to address scalability issue, monitor dashboard and 5 IoT raspberry devices standing for marketplace participants as well as DLT clients.}
    \label{fig:my_label}
\end{figure}

\subsubsection{Evaluation Metrics.} We consider several performance metrics for comparison. 
\begin{itemize}
    \item \textbf{Cost of Smart Contract:} We study the fees made by users to compensate for the computing energy required to process and validate transactions on the Ethereum. 
    \item \textbf{Incentive per worker:} The amount of tokens delivered to sellers based on their contributions of training the model. 
    \item \textbf{Maximum Different}: The performance score function $U$ is chosen to be the accuracy function. The SVs are then calculated according to the different schemes. For comparison of the accuracy of the SV, all SVs calculated are first standardized by scaling them by a common factor such that $\sum_{i=1}^n \phi_i=1$.
\end{itemize}

This is appropriate because profit distribution will likely be based on the percentage contribution. Then, the \textbf{maximum different} $D_{max}$ measures the maximum difference that a data provider should be allocated by the definition and by approximated calculation. The calculation is shown as below: 
   
\begin{equation}
    \srp{D_{\max} = \max_{i\in\{1, \ldots, n\}} | \phi_i^F - \phi_i^S|}.  
\end{equation}

\subsubsection{Scenarios}
\begin{itemize}
    \item \textbf{(S1)}. In scenario 1, the compared algorithms have same distribution with same dataset size, i.e., each client dataset $D_i, \forall i \in \mathcal{N}$ has the same amount of training image samples. 
    \item  \textbf{(S2)}. In scenario 2, we introduce the case with same distribution but different dataset size. The training set is divided randomly into 5 parts with the same ratio of data size.
    
    \item \textbf{(S3)}. In scenario 3, we use different distribution with same dataset size. Each client's dataset $D_i, \forall i \in \{1,2,3,4,5\}$ has the same size, but the training images are not equally divided for each digit.
    
    \item \textbf{(S4)}. In scenario 4, we consider the case having an added noise feature with same dataset. First, we split the training set in a similar manner as \textbf{(S1)}. Afterwards, we generate Gaussian noise for the dataset. This is done by adjusting the standard deviation of the normal distribution. 
\end{itemize}

\begin{table}[t!]
\centering
\caption{EXECUTION COSTS OF SMART CONTRACTS}
 \begin{tabular}{@{}  p{2.5cm}  p{1.2cm}  p{1.2cm} p{1.2cm} p{1.2cm}  @{}}
 \toprule
\textbf{Smart Contracts} &\hspace{0.2cm} \textbf{From} & \hspace{0.1cm} \textbf{Gas} & \hspace{0.1cm} \textbf{Ether} & \hspace{0.1cm} \textbf{USD} \\ [0.1ex] 
 \midrule
Contract Registry      & 0x283D382F & 1459430 & 15.9$\cdot 10^{-5}$ & 0.0723  \\ 
AddWorker              & 0x283D382F & 452467 & 45.2$\cdot 10^{-5}$ & 0.0692 \\
AddWorker              & 0x283D382F & 452545 & 45.2$\cdot 10^{-5}$ & 0.0692 \\
AddWorker              & 0x283D382F & 452436 & 45.2$\cdot 10^{-5}$ & 0.0692 \\
AddWorker              & 0x283D382F & 452545 & 45.2$\cdot 10^{-5}$ & 0.0692 \\
AddWorker              & 0x283D382F & 452436 & 45.2$\cdot 10^{-5}$ & 0.0692 \\
ModelTransmission      & 0x5846F427 & 19374 & 19.3$\cdot 10^{-5}$& 0.1621 \\
ModelTransmission      & 0x9dD8Fd06 & 243482 & 24.3$\cdot 10^{-5}$& 0.0902 \\ 
ModelTransmission      & 0x98HF8F94 & 228779 & 22.3$\cdot 10^{-5}$& 0.1121 \\
ModelTransmission      & 0x8H9FH780 & 253924 & 25.3$\cdot 10^{-5}$& 0.0951 \\ 
ModelTransmission      & 0x0932FD99 & 263924 & 19.3$\cdot 10^{-5}$& 0.0571 \\ 
ModelTraining          & 0x5846F427 & 223924 & 22.3$\cdot 10^{-5}$& 0.1021 \\ 
ModelTraining          & 0x9DD8Fd06 & 253924 & 25.3$\cdot 10^{-5}$& 0.0951 \\ 
ModelTraining          & 0x98HF8F94 & 193924 & 19.3$\cdot 10^{-5}$& 0.0571 \\
ModelTraining          & 0x8H9FH780 & 253924 & 25.3$\cdot 10^{-5}$& 0.0951 \\ 
ModelTraining          & 0x0932FD99 & 253924 & 19.3$\cdot 10^{-5}$& 0.0571 \\ 
ModelAggregation       & 0x5846F427 & 324942 & 32.4$\cdot 10^{-5}$& 0.0766 \\
ModelAggregation       & 0x9dD8Fd06 & 283445 & 22.4$\cdot 10^{-5}$& 0.0408 \\ 
ModelAggregation       & 0x98HF8F94 & 214939 & 21.4$\cdot 10^{-5}$& 0.0709 \\
ModelAggregation       & 0x8H9FH780 & 253924 & 25.3$\cdot 10^{-5}$& 0.0951 \\ 
ModelAggregation       & 0x0932FD99 & 193924 & 19.3$\cdot 10^{-5}$& 0.0571 \\ 
Settlement             & 0x283D382F & 212559 & 21.3$\cdot 10^{-5}$& 0.0712 \\ 
PayChannelExecute      & 0x283D382F & 212538 & 21.2$\cdot 10^{-5}$& 0.0702 \\ 
\bottomrule
\end{tabular}
    \begin{tablenotes}
      \small
      \item * 1 Ether = $10^9$ Gwei; 1 USD = 246,940.5627 Gwei
    \end{tablenotes}
\label{tab:gas}
\end{table}

\subsection{Results}

\begin{figure}[t!]
    \centering
    \begin{subfigure}{0.48\textwidth}
    \includegraphics[width=0.95\linewidth]{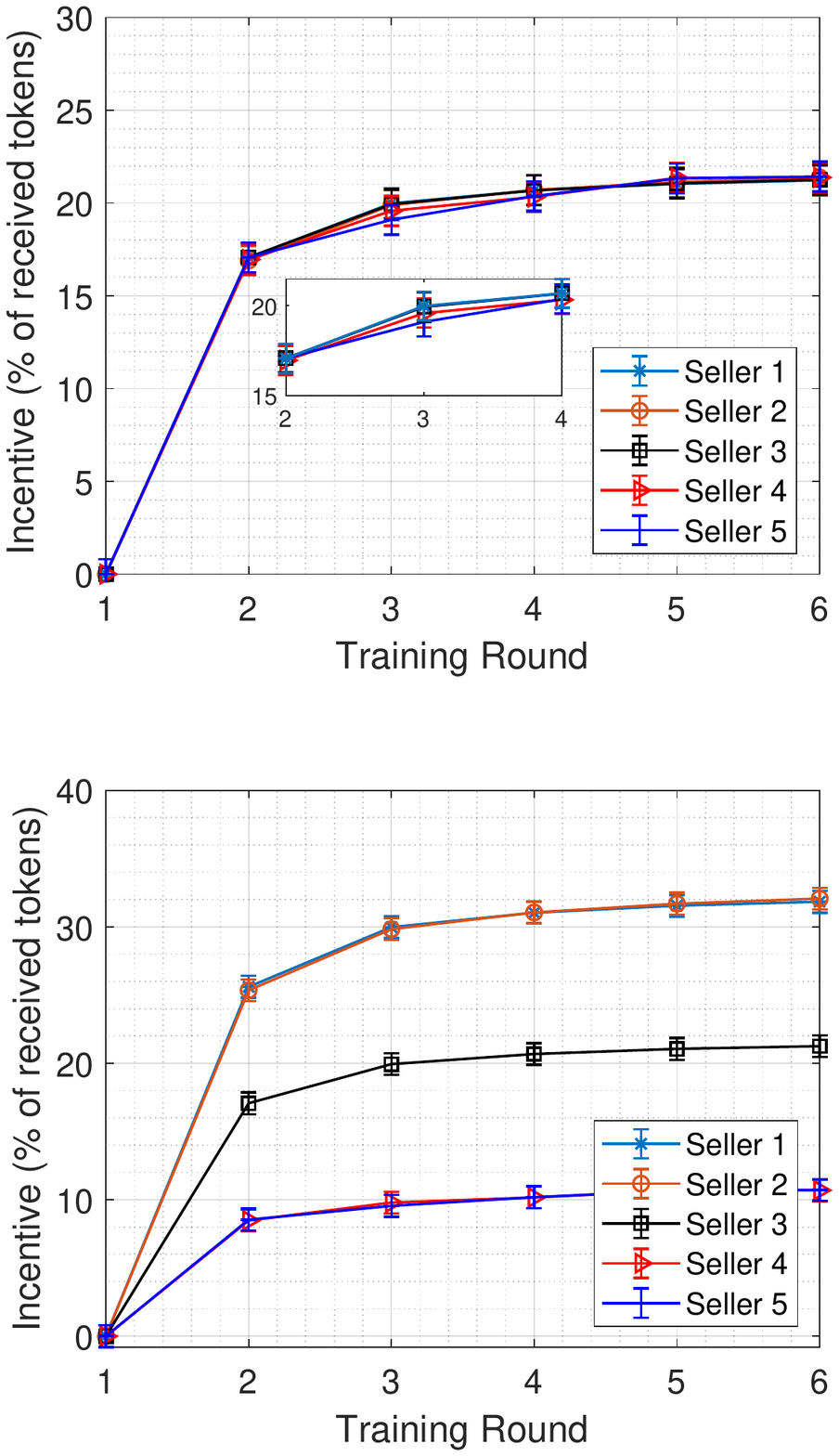} 
    \caption{Received incentive per client with the dataset distribution ratio 2:2:2:2:2.}
    \label{fig:incentive-a}%
    \end{subfigure}
    \begin{subfigure}{0.48\textwidth}
    \includegraphics[width=0.95\linewidth]{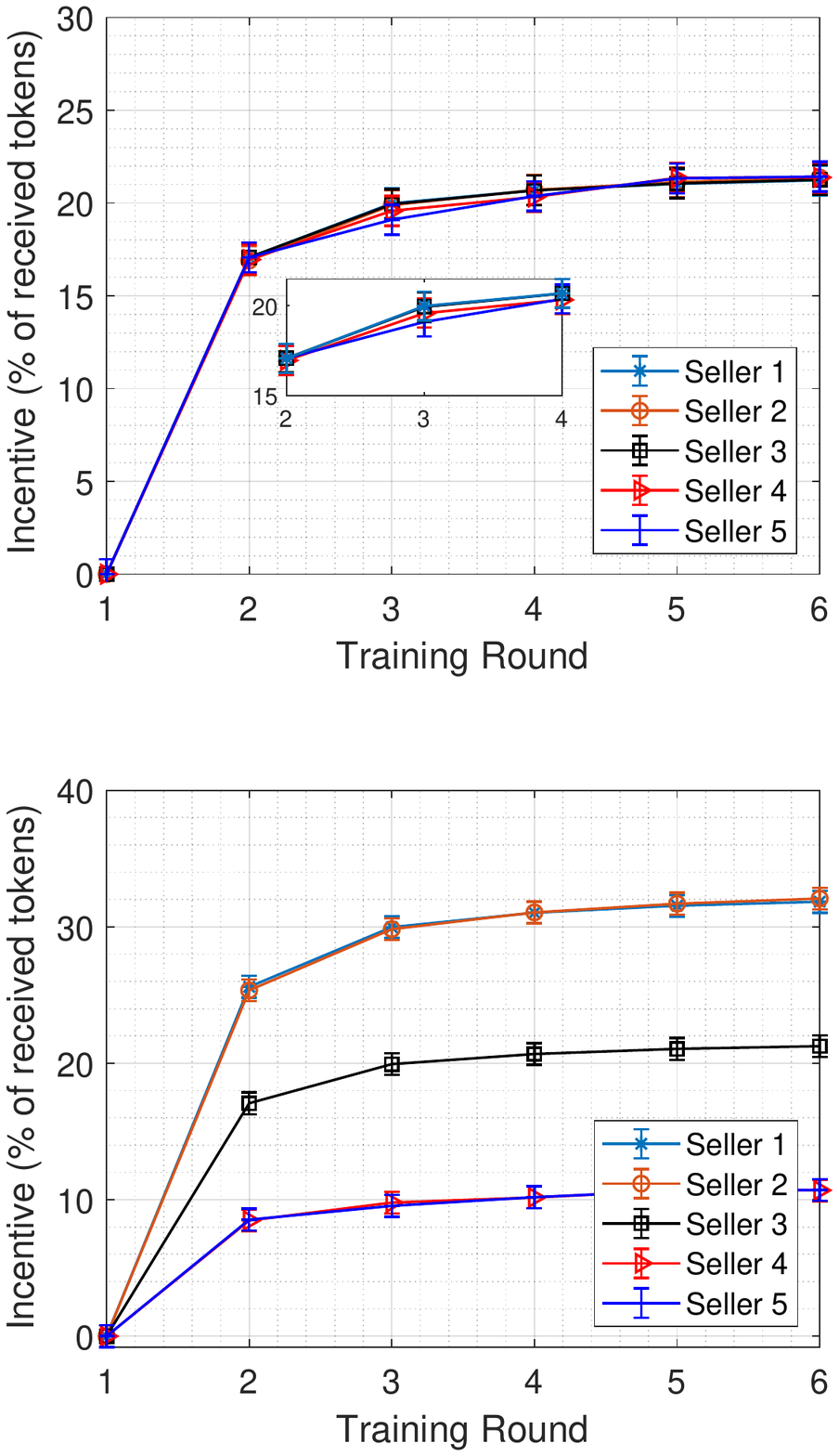}
    \caption{Received incentive per client with the dataset distribution ratio 3:3:2:1:1.}
    \label{fig:incentive-b}
    \end{subfigure}%
\caption{Incentive of clients.}
\label{fig:incentive}
\end{figure}

\subsubsection{Smart Contract Execution Cost}
\rv{In this part, the proof-of-concept of proposed model trading platform is deployed in a private Ethereum Blockchain called \textit{Ganache}\footnote{https://www.trufflesuite.com/ganache}. In distributed application \textit{Dapps}, the smart contract plays as key role in controlling and autonomously executing pre-defined agreements between the participants. We implemented and tested smart contracts using Remix IDE\footnote{https://remix.ethereum.org/}. In Ethereum network, there is a fee called \emph{gas}, needed to pay for any operation or transaction execution that changes the DLT states, which guarantees that smart contracts running in Ethereum Virtual Machine (EVM)\cite{ethereum} will be terminated eventually. In the scope of this research, we used Gwei\footnote{https://www.cryps.info/} to evaluate the cost of different operations, for example, \emph{AddWorker, ModelTransmisson, ModelTraning,} or \emph{Settlement} in the model trading process. The result is demonstrated in Table. \ref{tab:gas}.}

\begin{figure}[t]
    \centering
    \includegraphics[width=0.98\linewidth]{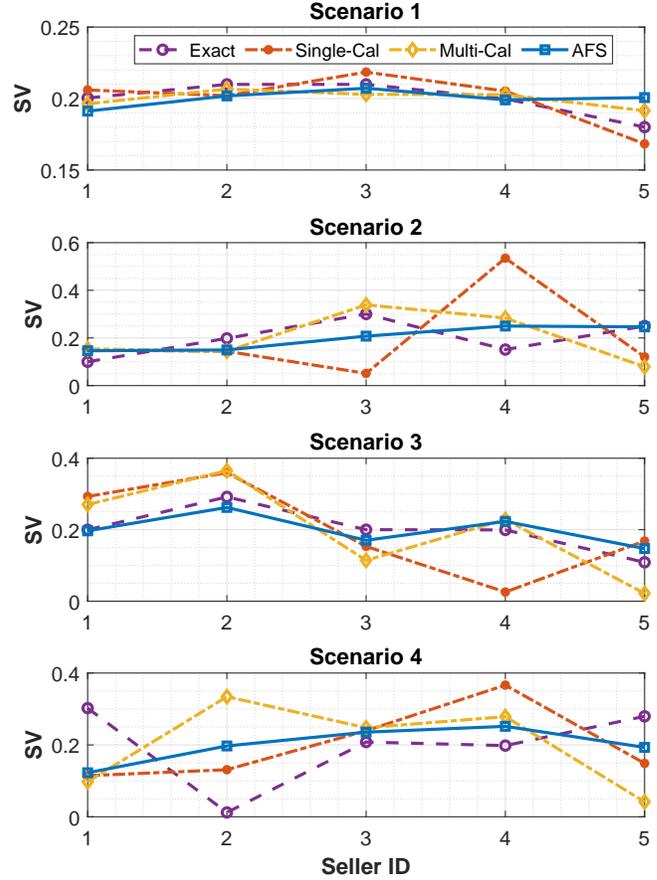}
    \caption{Shapley Value of each seller in different methods}
    \label{fig:shapleyValue}
\end{figure}

\begin{figure}[t!]
    \centering
    \includegraphics[width=0.98\linewidth]{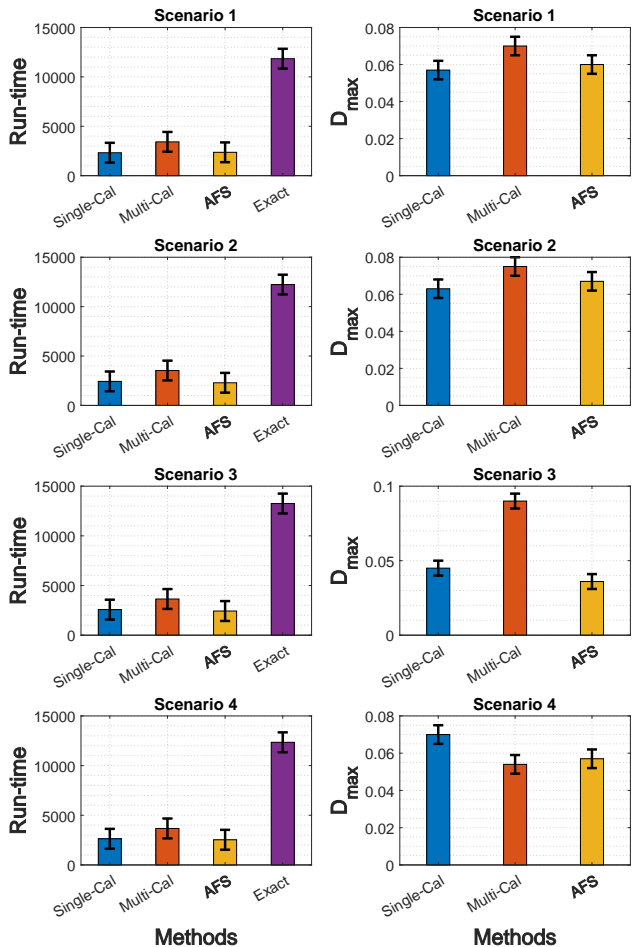}
    \caption{Comparison of execution time and $D_{\max}$ between algorithms.}
    \label{fig:runtim}
\end{figure}

\subsubsection{Incentive per client}
In Fig.~\ref{fig:incentive}, we show the comparison of received incentives by each training client based on their contribution to the global model training. The incentive is equivalent to tokens clients receive. As expected, in Fig. \ref{fig:incentive-a}, where the MNIST dataset is divided equally with a ratio of 2:2:2:2:2 for five involved clients, the amount of tokens they receive are almost similar as expected. In Fig. \ref{fig:incentive-b}, we show the comparison of the received percentage of tokens that clients can achieve with the dataset ratio of 3:3:2:1:1. We observe that client 1 and client 2 has the same amount of dataset, so they receive the same amount of tokens for their contribution, similar to the case for clients 4 and 5. \lam{Note that the sellers can train the models with poor quality, which, in fact, reduces the stability and performance of the global models. In this regard, there exist several mechanisms to handle such dishonest reporting of parameters in the FL setting, such as \cite{Xiong, niu2018achieving, agarwal2019marketplace}. Similar to this, the DLT keeps track of the contribution of devices and the gradient information and the size of data samples to regularly infer (check) the relationship between the expected model quality, reported data samples, and the obtained SV as Fig. \ref{fig:shapleyValue}; hence, dealing untruthful reporting. However, the detailed study of this mechanism is out of scope for this work. In Fig. 8, the \emph{AFS} shows a better performance while other methods turns out quite random SVs, especially in scenario 2 and 4 where the size of dataset is random and noise added.} 

\subsubsection{Execution time and maximum different comparison}
In Fig. \ref{fig:runtim}, we show the time performance of exact FL, Single-Cal, Multi-Cal, and AFS protocol. The Multi-Cal algorithm is more computational expensive than the Single-Cal algorithm. The standard exact method is the slowest one because the standard Shapley Value is naturally not compatible with the Federated Learning. In the scenario 1, each worker has same quality and quantity of dataset, so that we expect each worker has same contribution and receive equally the amount of incentive. The results show that the Single-Cal and AFS algorithm have higher efficiency in execution time. The exact method is around 5 times slower than the rest of methods because of frequent model retrain process. Meanwhile, the $D_{max}$ of methods are relatively low, around $0.05$. Similar in scenario 2 with the same size of dataset and different distribution, AFS and Single-Cal have better performance in running time and the accuracy. However, in scenario 4 with more noisy data, the Multi-Cal shows better results, $\approx 10\%$ in run-time but and $\approx 15\%$ in terms of maximum different value.

\section{Related Works}


In this section, we first present the current works on asset trading based on Blockchain and data valuation. 

\textbf{Blockchain-based asset trading}. With the spread of ubiquitous marketplaces, \PP{it became relevant to explore} the application of IoT data trading in marketplace environments. \rv{For instance, the authors in \cite{IoTMartetplace} considered a dynamic decentralized marketplace and introduced the architecture for trading IoT data accordingly. }The approach involves a 3-tier method is used: 1) data provider, 2) broker and 3) data consumer. The primary purpose of DLTs in their function is to manage the conditions of agreements between the parties involved. In addition, the design has a reputation system that penalizes members and lowers their rating. The authors in \cite{iyengar2019towards} invested the optimization problem of revenue maximization with envy-free guarantee. The authors studied two scenarios including unit demand consumers and single minded consumers, and showed the optimization problem is APX-hard for both scenarios, which can be efficiently addressed by a logarithmic approximation. \rv{The authors in \cite{IoTMartetplace2} took into account the trading of IoT streaming data with the presented marketplace model, where fraudulent activity during data exchange is limited. To do so, the authors introduced periodic checkpoints during data trading.} In \cite{Missier}, the authors proposed another marketplace which flows of IoT data are the main digital assets exchanged utilizing Oracles for the off-chain queries.The authors in \cite{Xiong} presented a \PP{trading mode based on smart contracts}. \rv{In particular, the authors employ arbitration that handles disputes during the data trading, particularly, over the data availability, and incorporates AI/ML to ensure fairness during data exchange.}
 
\textbf{Data Valuation}. Evaluating the value of data has been received significant attention from both academia and industrial areas. 
\rv{Several works studied data valuation strategies and their applications. In this regard, the authors in \cite{wang2020principled} defined the data valuation in several categories, such as: (i) query-based pricing, where prices are attached to user-initiated queries \cite{kairouz2019advances}, \cite{leroy2019federated}, \cite{upadhyaya2016price}, (ii) data attribute-based pricing, where the price model considers data attributes, such as the age of data and its credibility, using the mechanism of public price registries \cite{heckman2015pricing}, and (iii) auction-based pricing, where the price is dynamically set following auction mechanisms \cite{mihailescu2010dynamic,lee2010sell }.}
In \cite{jia2019towards},\rv{multiple approximation strategies for optimizing the computation complex of SV for training data are introduced. Besides, the authors proposed an soltiion to compute exact SC in specific scenario, e.g nearest neighbor classifiers. Besides, the SV also is applied in various AI/ML application, for example, to measure the importance of model features\cite{cohen2007feature, strumbelj2010efficient}. In specific, the authors addressed the problem when the same data points get the same values, and relationship between data distributions and SV function. In addition, the authorsproposed an idea of distributional SV occurs resemblance to the Aumann-SV\cite{aumann2015values}. In practical manner, the authors in \cite{tang2021data} proved that the performance of model training can be improved by removing the data with low SV value. In contrast, the performance will be decreased if we deleting the training data with high SV values. }

\section{Conclusion}
In this paper, we proposed a DLT-based marketplace for trading ML models, which helps companies and organizations train their learning models \arne{in a scalable and efficient manner}. An incentive mechanism exists to stimulate participants in joining and training the learning models on the marketplace, which pays participants based on their contributions to train the model. To that end, an extended Data Shapley Value (DSV) for the federated environment is proposed to measure each participant's contribution in the model training process. Finally, with extensive experimental evaluations with Ethereum Blockchain to build a marketplace for model trading using smart contracts and IoT devices acting as participants, we demonstrated the design and performance of the proposed ecosystem. 

\section{Acknowledgment}

This work has received funding from the European Union’s Horizon 2020 research and innovation programme under grant agreement No. 957218 (Project IntellIoT).

\bibliographystyle{ieeetr}
\bibliography{MAIN}

\begin{thebibliography}{10}

\bibitem{data794}
I.~Report, ``The growth in connected iot devices is expected to generate 79.4zb
  of data in 2025, according to a new idc forecast.''
  [Online]\url{https://www.idc.com}, 2019.
\newblock (Accessed on 12/04/2020).

\bibitem{previous1}
Z.~Huang, X.~Su, Y.~Zhang, C.~Shi, H.~Zhang, and L.~Xie, ``A decentralized
  solution for iot data trusted exchange based-on blockchain,'' in {\em 2017
  3rd IEEE International Conference on Computer and Communications (ICCC)},
  pp.~1180--1184, IEEE, 2017.

\bibitem{previous2}
C.~Perera, ``Sensing as a service (s2aas): Buying and selling iot data,'' {\em
  arXiv preprint arXiv:1702.02380}, 2017.

\bibitem{infocom2019}
W.~Mao, Z.~Zheng, and F.~Wu, ``Pricing for revenue maximization in iot data
  markets: An information design perspective,'' in {\em IEEE INFOCOM 2019-IEEE
  Conference on Computer Communications}, pp.~1837--1845, IEEE, 2019.

\bibitem{bishoi2009comparative}
B.~Bishoi, A.~Prakash, V.~Jain, {\em et~al.}, ``A comparative study of air
  quality index based on factor analysis and us-epa methods for an urban
  environment,'' {\em Aerosol and Air Quality Research}, vol.~9, no.~1,
  pp.~1--17, 2009.

\bibitem{jo2020development}
J.~Jo, B.~Jo, J.~Kim, S.~Kim, and W.~Han, ``Development of an iot-based indoor
  air quality monitoring platform,'' {\em Journal of Sensors}, vol.~2020, 2020.

\bibitem{9184079}
Y.~Liu, J.~Nie, X.~Li, S.~H. Ahmed, W.~Y.~B. Lim, and C.~Miao, ``Federated
  learning in the sky: Aerial-ground air quality sensing framework with uav
  swarms,'' {\em IEEE Internet of Things Journal}, vol.~8, no.~12,
  pp.~9827--9837, 2021.

\bibitem{moltchanov2015feasibility}
S.~Moltchanov, I.~Levy, Y.~Etzion, U.~Lerner, D.~M. Broday, and B.~Fishbain,
  ``On the feasibility of measuring urban air pollution by wireless distributed
  sensor networks,'' {\em Science of The Total Environment}, vol.~502,
  pp.~537--547, 2015.

\bibitem{gruschka2018privacy}
N.~Gruschka, V.~Mavroeidis, K.~Vishi, and M.~Jensen, ``Privacy issues and data
  protection in big data: a case study analysis under gdpr,'' in {\em 2018 IEEE
  International Conference on Big Data (Big Data)}, pp.~5027--5033, IEEE, 2018.

\bibitem{mcmahan2017communication}
B.~McMahan, E.~Moore, D.~Ramage, S.~Hampson, and B.~A. y~Arcas,
  ``Communication-efficient learning of deep networks from decentralized
  data,'' in {\em Artificial intelligence and statistics}, pp.~1273--1282,
  PMLR, 2017.

\bibitem{xu2014ubiquitous}
B.~Xu, L.~Da~Xu, H.~Cai, C.~Xie, J.~Hu, and F.~Bu, ``Ubiquitous data accessing
  method in iot-based information system for emergency medical services,'' {\em
  IEEE Transactions on Industrial informatics}, vol.~10, no.~2, pp.~1578--1586,
  2014.

\bibitem{radhakrishnan2018streaming}
R.~Radhakrishnan and B.~Krishnamachari, ``Streaming data payment protocol
  (sdpp) for the internet of things,'' in {\em 2018 IEEE International
  Conference on Internet of Things (iThings) and IEEE Green Computing and
  Communications (GreenCom) and IEEE Cyber, Physical and Social Computing
  (CPSCom) and IEEE Smart Data (SmartData)}, pp.~1679--1684, IEEE, 2018.

\bibitem{niu2018achieving}
C.~Niu, Z.~Zheng, F.~Wu, X.~Gao, and G.~Chen, ``Achieving data truthfulness and
  privacy preservation in data markets,'' {\em IEEE Transactions on Knowledge
  and Data Engineering}, vol.~31, no.~1, pp.~105--119, 2018.

\bibitem{langevoort1992fraud}
D.~C. Langevoort, ``Fraud and insider trading in american securities
  regulation: Its scope and philosophy in a global marketplace,'' {\em Hastings
  Int'l \& Comp. L. Rev.}, vol.~16, p.~175, 1992.

\bibitem{pandey2020crowdsourcing}
S.~R. Pandey, N.~H. Tran, M.~Bennis, Y.~K. Tun, A.~Manzoor, and C.~S. Hong, ``A
  crowdsourcing framework for on-device federated learning,'' {\em IEEE
  Transactions on Wireless Communications}, vol.~19, no.~5, pp.~3241--3256,
  2020.

\bibitem{bitcoin}
S.~Nakamoto, ``Bitcoin: A peer-to-peer electronic cash system,'' tech. rep.,
  Manubot, 2008.

\bibitem{nguyen2021modeling}
L.~D. Nguyen, I.~Leyva-Mayorga, A.~N. Lewis, and P.~Popovski, ``Modeling and
  analysis of data trading on blockchain-based market in iot networks,'' {\em
  IEEE Internet of Things Journal}, vol.~8, no.~8, pp.~6487--6497, 2021.

\bibitem{9106844}
B.~Chen, D.~He, N.~Kumar, H.~Wang, and K.-K.~R. Choo, ``A blockchain-based
  proxy re-encryption with equality test for vehicular communication systems,''
  {\em IEEE Transactions on Network Science and Engineering}, vol.~8, no.~3,
  pp.~2048--2059, 2021.

\bibitem{iotmagazine}
L.~D. {Nguyen}, A.~E. {Kalor}, I.~{Leyva-Mayorga}, and P.~{Popovski}, ``Trusted
  wireless monitoring based on distributed ledgers over nb-iot connectivity,''
  {\em IEEE Communications Magazine}, vol.~58, no.~6, pp.~77--83, 2020.

\bibitem{9426434}
T.~Wang, C.~Zhao, Q.~Yang, S.~Zhang, and S.~C. Liew, ``Ethna: Analyzing the
  underlying peer-to-peer network of ethereum blockchain,'' {\em IEEE
  Transactions on Network Science and Engineering}, vol.~8, no.~3,
  pp.~2131--2146, 2021.

\bibitem{kairouz2019advances}
P.~Kairouz, H.~B. McMahan, B.~Avent, A.~Bellet, M.~Bennis, A.~N. Bhagoji,
  K.~Bonawitz, Z.~Charles, G.~Cormode, R.~Cummings, {\em et~al.}, ``Advances
  and open problems in federated learning,'' {\em arXiv preprint
  arXiv:1912.04977}, 2019.

\bibitem{kim2019blockchained}
H.~Kim, J.~Park, M.~Bennis, and S.-L. Kim, ``Blockchained on-device federated
  learning,'' {\em IEEE Communications Letters}, vol.~24, no.~6,
  pp.~1279--1283, 2019.

\bibitem{roth1988shapley}
A.~E. Roth, {\em The Shapley value: essays in honor of Lloyd S. Shapley}.
\newblock Cambridge University Press, 1988.

\bibitem{ethereum}
G.~Wood {\em et~al.}, ``Ethereum: A secure decentralised generalised
  transaction ledger,'' {\em Ethereum project yellow paper}, vol.~151,
  no.~2014, pp.~1--32, 2014.

\bibitem{jia2019towards}
R.~Jia, D.~Dao, B.~Wang, F.~A. Hubis, N.~Hynes, N.~M. G{\"u}rel, B.~Li,
  C.~Zhang, D.~Song, and C.~J. Spanos, ``Towards efficient data valuation based
  on the shapley value,'' in {\em The 22nd International Conference on
  Artificial Intelligence and Statistics}, pp.~1167--1176, PMLR, 2019.

\bibitem{9271890}
X.~Ding, J.~Guo, D.~Li, and W.~Wu, ``An incentive mechanism for building a
  secure blockchain-based internet of things,'' {\em IEEE Transactions on
  Network Science and Engineering}, vol.~8, no.~1, pp.~477--487, 2021.

\bibitem{qu2020decentralized}
Y.~Qu, L.~Gao, T.~H. Luan, Y.~Xiang, S.~Yu, B.~Li, and G.~Zheng,
  ``Decentralized privacy using blockchain-enabled federated learning in fog
  computing,'' {\em IEEE Internet of Things Journal}, vol.~7, no.~6,
  pp.~5171--5183, 2020.

\bibitem{ghorbani2019data}
A.~Ghorbani and J.~Zou, ``Data shapley: Equitable valuation of data for machine
  learning,'' in {\em International Conference on Machine Learning},
  pp.~2242--2251, PMLR, 2019.

\bibitem{charles2020outsized}
Z.~Charles and J.~Kone{\v{c}}n{\`y}, ``On the outsized importance of learning
  rates in local update methods,'' {\em arXiv preprint arXiv:2007.00878}, 2020.

\bibitem{hoeffding1951combinatorial}
W.~Hoeffding, ``A combinatorial central limit theorem,'' {\em The Annals of
  Mathematical Statistics}, pp.~558--566, 1951.

\bibitem{Xiong}
W.~Xiong and L.~Xiong, ``Smart contract based data trading mode using
  blockchain and machine learning,'' {\em IEEE Access}, vol.~7,
  pp.~102331--102344, 2019.

\bibitem{agarwal2019marketplace}
A.~Agarwal, M.~Dahleh, and T.~Sarkar, ``A marketplace for data: An algorithmic
  solution,'' in {\em Proceedings of the 2019 ACM Conference on Economics and
  Computation}, pp.~701--726, 2019.

\bibitem{IoTMartetplace}
P.~Gupta, S.~Kanhere, and R.~Jurdak, ``A decentralized iot data marketplace,''
  {\em arXiv preprint arXiv:1906.01799}, 2019.

\bibitem{iyengar2019towards}
R.~Iyengar, J.~P. Near, D.~Song, O.~Thakkar, A.~Thakurta, and L.~Wang,
  ``Towards practical differentially private convex optimization,'' in {\em
  2019 IEEE Symposium on Security and Privacy (SP)}, pp.~299--316, IEEE, 2019.

\bibitem{IoTMartetplace2}
S.~Bajoudah, C.~Dong, and P.~Missier, ``Toward a decentralized, trust-less
  marketplace for brokered iot data trading using blockchain,'' in {\em 2019
  IEEE International Conference on Blockchain (Blockchain)}, pp.~339--346,
  IEEE, 2019.

\bibitem{Missier}
P.~Missier, S.~Bajoudah, A.~Capossele, A.~Gaglione, and M.~Nati, ``Mind my
  value: a decentralized infrastructure for fair and trusted iot data
  trading,'' in {\em Proceedings of the Seventh International Conference on the
  Internet of Things}, pp.~1--8, 2017.

\bibitem{wang2020principled}
T.~Wang, J.~Rausch, C.~Zhang, R.~Jia, and D.~Song, ``A principled approach to
  data valuation for federated learning,'' in {\em Federated Learning},
  pp.~153--167, Springer, 2020.

\bibitem{leroy2019federated}
D.~Leroy, A.~Coucke, T.~Lavril, T.~Gisselbrecht, and J.~Dureau, ``Federated
  learning for keyword spotting,'' in {\em ICASSP 2019-2019 IEEE International
  Conference on Acoustics, Speech and Signal Processing (ICASSP)},
  pp.~6341--6345, IEEE, 2019.

\bibitem{upadhyaya2016price}
P.~Upadhyaya, M.~Balazinska, and D.~Suciu, ``Price-optimal querying with data
  apis,'' {\em Proceedings of the VLDB Endowment}, vol.~9, no.~14,
  pp.~1695--1706, 2016.

\bibitem{heckman2015pricing}
J.~R. Heckman, E.~L. Boehmer, E.~H. Peters, M.~Davaloo, and N.~G. Kurup, ``A
  pricing model for data markets,'' {\em iConference 2015 Proceedings}, 2015.

\bibitem{mihailescu2010dynamic}
M.~Mihailescu and Y.~M. Teo, ``Dynamic resource pricing on federated clouds,''
  in {\em 2010 10th IEEE/ACM International Conference on Cluster, Cloud and
  Grid Computing}, pp.~513--517, IEEE, 2010.

\bibitem{lee2010sell}
J.-S. Lee and B.~Hoh, ``Sell your experiences: a market mechanism based
  incentive for participatory sensing,'' in {\em 2010 IEEE International
  Conference on Pervasive Computing and Communications (PerCom)}, pp.~60--68,
  IEEE, 2010.

\bibitem{cohen2007feature}
S.~Cohen, G.~Dror, and E.~Ruppin, ``Feature selection via coalitional game
  theory,'' {\em Neural Computation}, vol.~19, no.~7, pp.~1939--1961, 2007.

\bibitem{strumbelj2010efficient}
E.~Strumbelj and I.~Kononenko, ``An efficient explanation of individual
  classifications using game theory,'' {\em The Journal of Machine Learning
  Research}, vol.~11, pp.~1--18, 2010.

\bibitem{aumann2015values}
R.~J. Aumann and L.~S. Shapley, {\em Values of non-atomic games}.
\newblock Princeton University Press, 2015.

\bibitem{tang2021data}
S.~Tang, A.~Ghorbani, R.~Yamashita, S.~Rehman, J.~A. Dunnmon, J.~Zou, and D.~L.
  Rubin, ``Data valuation for medical imaging using shapley value and
  application to a large-scale chest x-ray dataset,'' {\em Scientific reports},
  vol.~11, no.~1, pp.~1--9, 2021.

\end{thebibliography}


\begin{IEEEbiography}[{\includegraphics[width=0.9in,height=1.55in,clip,keepaspectratio]{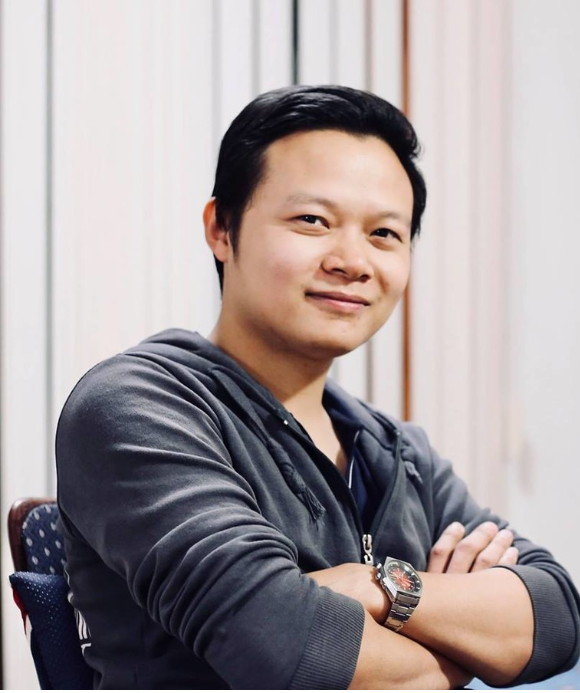}}] {Lam Duc Nguyen}
(S'20) is a Ph.D. Fellow at Aalborg University. He obtained Master Degree in Computer Science at Seoul National University, and a Bachelor in Telecommunication at Hanoi University of Science and Technologies in 2019 and 2015, respectively. His research includes Distributed Systems, Blockchain, Smart Contracts, the Internet of Things, and applying Blockchain and Federated Learning to enhance the efficiency of Blockchain-based IoT monitoring Networks. He receives Outstanding Paper Award for the research about scaling Blockchain in Massive IoT at the IEEE World Forum Internet of Things 2020, travel grant from Linux Foundation 2020, Best Research Award for a solution of Blockchain-based CO2 Emission Trading from VEHITS 2021. He is Hyperledger Member, IEEE Student Member, and IEEE ComSoc Student Member. 
\end{IEEEbiography}

\begin{IEEEbiography}[{\includegraphics[width=0.9in,height=1.55in,clip,keepaspectratio]{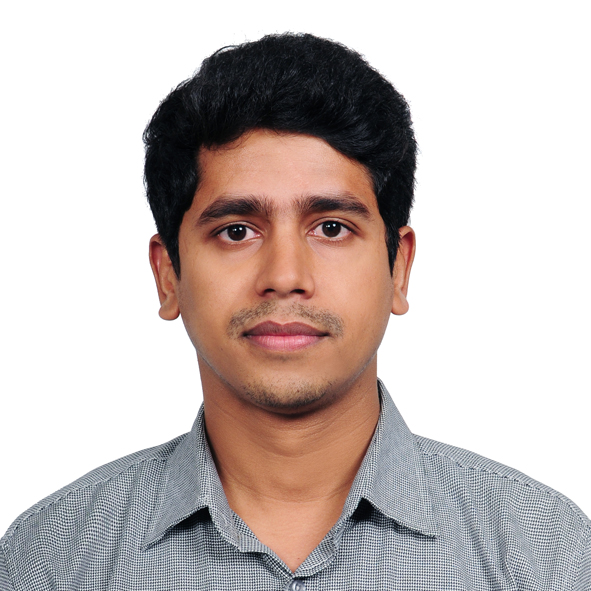}}] {Shashi Raj Pandey}(M'21) is currently working as a Postdoctoral Researcher at the Connectivity Section, Aalborg University. He received the B.E. degree in Electrical and Electronics with a specialization in Communication from Kathmandu University, Nepal in 2013, and the Ph.D. degree in Computer Science and Engineering from Kyung Hee University, Seoul, South Korea, in August, 2021. He served as a Network Engineer at Huawei Technologies Nepal Co. Pvt. Ltd, Nepal from 2013 to 2016. His research interests include network economics, game theory, wireless communications, data markets and distributed machine learning.
\end{IEEEbiography}

\begin{IEEEbiography}[{\includegraphics[width=0.9in,height=1.55in,clip,keepaspectratio]{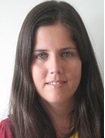}}] {Beatriz Soret}[M’11] received her M.Sc. and Ph.D. degrees in Telecommunications from the Universidad de Malaga, Spain, in 2002 and 2010, respectively. She is currently an associate professor at the Department of Electronic Systems, Aalborg University, and a Senior Research Fellow at the Communications Engineering Department, University of Malaga. Her research interests include LEO satellite communications, distributed and intelligent IoT, timing in communications, and 5G and post-5G systems.
\end{IEEEbiography}

\begin{IEEEbiography}[{\includegraphics[width=0.9in,height=1.55in,clip,keepaspectratio]{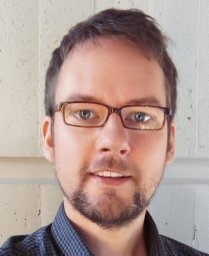}}] {Arne Br\"{o}ring}
is a Senior Key Expert Research Scientist at Siemens Technology in Munich. He received his PhD in 2012 from the University of Twente (Netherlands). Dr. Br\"{o}ring has contributed to over 90 publications in the field of distributed systems and has served on various program committees and editorial boards. His research interests range from distributed system designs, over sensor networks, and Semantic Web, to the Internet of Things. At Siemens, he has been in charge of the technical \& scientific coordination of large EU research projects (BIG IoT and IntellIoT). Before joining Siemens, Dr. Br\"{o}ring worked for the Environmental Systems Research Institute in Zurich, the 52°North Open Source Initiative, and led the Sensor Web and Simulation Lab at the University of Münster.
\end{IEEEbiography}

\begin{IEEEbiography}[{\includegraphics[width=0.9in,height=1.55in,clip,keepaspectratio]{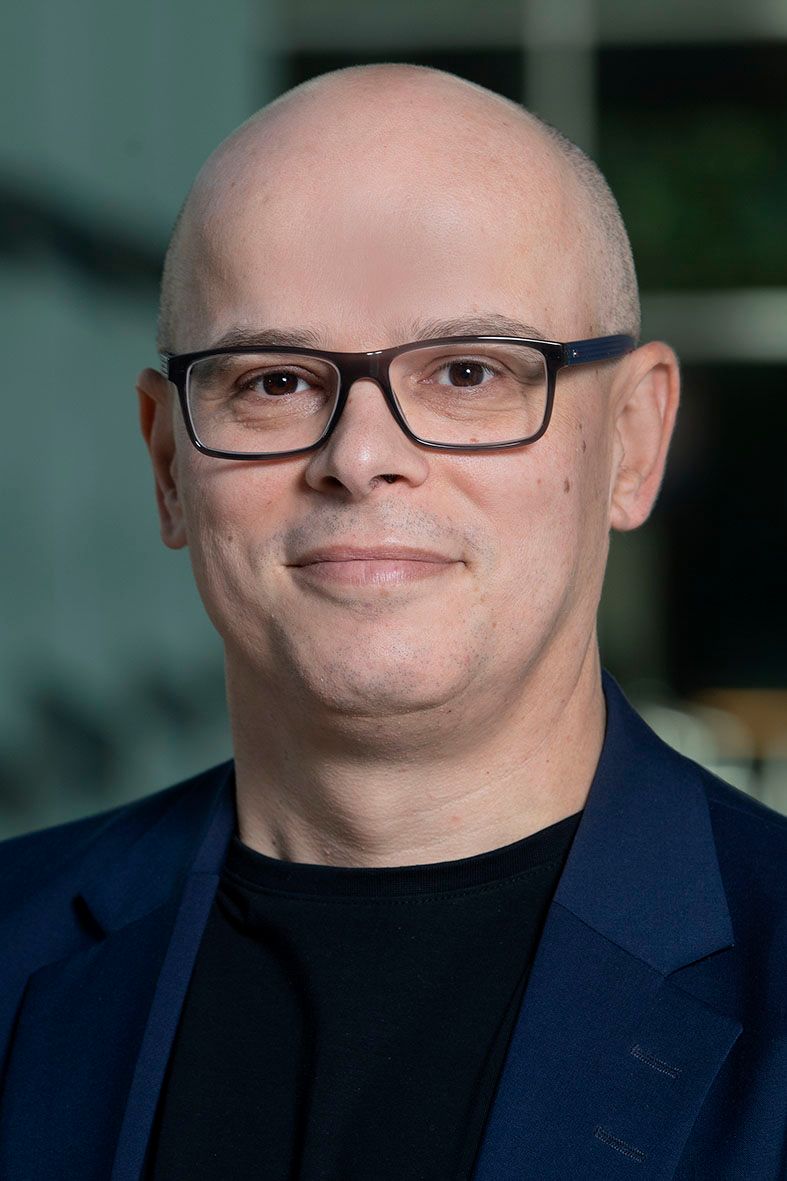}}] {Petar Popovski}
(Fellow, IEEE) is a Professor at Aalborg University, where he heads the section on Connectivity and a Visiting Excellence Chair at the University of Bremen. He received his Dipl.-Ing and M. Sc. degrees in communication engineering from the University of Sts. Cyril and Methodius in Skopje and the Ph.D. degree from Aalborg University in 2005. He is a Fellow of the IEEE. He received an ERC Consolidator Grant (2015), the Danish Elite Researcher award (2016), IEEE Fred W. Ellersick prize (2016), IEEE Stephen O. Rice prize (2018), Technical Achievement Award from the IEEE Technical Committee on Smart Grid Communications (2019), the Danish Telecommunication Prize (2020) and Villum Investigator Grant (2021). He is a Member at Large at the Board of Governors in IEEE Communication Society, Vice-Chair of the IEEE Communication Theory Technical Committee and IEEE TRANSACTIONS ON GREEN COMMUNICATIONS AND NETWORKING. He is currently an Area Editor of the IEEE TRANSACTIONS ON WIRELESS COMMUNICATIONS and, from 2022, an Editor-in-Chief of IEEEE JOURNAL ON SELECTED AREAS IN COMMUNICATIONS. Prof. Popovski was the General Chair for IEEE SmartGridComm 2018 and IEEE Communication Theory Workshop 2019. His research interests are in the area of wireless communication and communication theory. He authored the book ``Wireless Connectivity: An Intuitive and Fundamental Guide'', published by Wiley in 2020.
\end{IEEEbiography}

\end{document}